\newcommand{\rom}[1]{%
  \textup{\uppercase\expandafter{\romannumeral#1}}%
}
\declaretheorem{theorem}
\newtheorem*{theorem*}{Theorem}
\renewcommand{\H}{\mathcal{K}}
\newcommand{\bx}{{\pmb x}}
\newcommand{\bz}{{\pmb z}}
\newcommand{\balpha}{{\pmb \alpha}}
\let\oldphi\phi
\renewcommand\phi{\operatorname{\oldphi}}
\newcommand\norm[1]{\left\lVert#1\right\rVert}
\DeclareMathAlphabet{\mathpzc}{OT1}{pzc}{m}{it}
\newcommand\expect[1]{\mathbb{E}{\left\lbrack#1\right\rbrack}}
\newcommand\expectp[2]{\mathbb{E}_{#1}{\left\lbrack#2\right\rbrack}}
\theoremstyle{definition}
\newcommand{\sign}{\mathop{\mathrm{sign}}}
\newcommand{\surround}[2][r]%
  {\ifstrequal{#1}{round}%
    {\left( #2 \right)}%
    {\ifstrequal{#1}{square}%
      {\left[ #2 \right]}%
      {\ifstrequal{#1}{curly}%
        {\left\{ #2 \right\}}%
        {\ifstrequal{#1}{angle}%
          {\left\langle #2 \right\rangle}%
          {\ifstrequal{#1}{|}%
            {\left\lvert #2 \right\rvert}%
            {\ifstrequal{#1}{||}%
              {\left\lVert #2 \right\rVert}%
              {\ifstrequal{#1}{floor}%
                {\left\lfloor #2 \right\rfloor}%
                {\ifstrequal{#1}{ceil}%
                  {\left\lceil #2 \right\rceil}%
                  {\ifstrequal{#1}{.}%
                    {\left. #2 \right.}%
                    {\left( #2 \right)}%
                  }%
                }%
              }%
            }%
          }%
        }%
      }%
    }%
  }
\renewcommand{\H}{\mathcal{H}}
\newcommand{\R}{\mathbb{R}}
\declaretheorem{Proposition}
\title{To Understand Deep Learning We Need to Understand Kernel Learning}
 \author[]{Mikhail Belkin}
 \author[]{~~Siyuan Ma}
 \author[]{~~Soumik Mandal}
 \affil{Department of Computer Science and Engineering}
 \affil{Ohio State University}
 \affil{
\textit{\{mbelkin, masi\}@cse.ohio-state.edu},
 \textit{mandal.32@osu.edu}
 }
\author{}
\date{}
\begin{document}

\maketitle

\vspace{-.7in}
\begin{abstract}
Generalization performance of classifiers in deep learning has recently become a subject of intense study. Deep models, which are typically heavily over-parametrized, tend to fit the training data exactly.  Despite this ``overfitting", they perform well on test data, a  phenomenon  not yet fully understood.

The first point of our paper is that strong performance of overfitted classifiers is not a unique feature of deep learning. Using six real-world  and two synthetic datasets, we establish experimentally that kernel machines trained to have zero classification error  or  near zero regression error (interpolation) perform very well on test data, even when the labels are corrupted with a high level of noise. 
We proceed to give a lower bound on the norm of zero loss solutions for smooth kernels, showing that they increase nearly exponentially with  data size. We point out that this is difficult to reconcile with the existing generalization bounds.  Moreover,  none of the  bounds produce non-trivial results for interpolating solutions.

Second, we  show experimentally that (non-smooth) Laplacian kernels easily fit random labels, 
a finding that parallels results recently reported for ReLU neural networks. In contrast, fitting noisy data requires many more epochs for smooth Gaussian kernels.  Similar  performance of overfitted Laplacian and Gaussian classifiers on test, suggests that  generalization is tied to the properties of the kernel function  rather than the  optimization process.

Certain  key phenomena of  deep learning are manifested similarly in  kernel methods in the modern ``overfitted" regime. 
The combination of the experimental and theoretical results presented in this paper indicates a need for  new theoretical ideas for understanding properties of classical kernel methods. 
We argue that progress on understanding  deep learning will be difficult until more  tractable ``shallow'' kernel methods are better understood.
\end{abstract}

\section{Introduction}

% --whenever the  underlying (Bayes) optimal classifier does not have zero classification error. While we do not know the error rate of the optimal classifier on real datasets, we can ensure model that situation by injecting a small amount of label noise. We show experimentally that the extra noise leads to only a slight deterioration of performance close to the noise level. 
% We also provide experiments on two synthetic datasets 
% to confirm that kernel classifiers trained to interpolate the data show near optimal performance, despite the large norm of the solutions functions. 
% ----

The key question in supervised machine learning is that of {\it generalization}. How will a classifier trained on a certain data set perform on unseen data?   A typical theoretical setting for addressing this question is classical Empirical Risk Minimization (ERM)~\cite{Vapnik}. 
Given data $\{(\bx_i,y_i), i=1,\ldots,n\}$ sampled from a probability distribution $P$ on $\Omega \times \{-1,1\}$, a class of functions ${\mathcal H}:\Omega \to \R$ and a loss function $l$, ERM finds a minimizer  of the empirical loss:
$$
f^* = \arg\min_{f \in {\mathcal H~}} L_{emp}(f) := \arg\min_{f \in {\mathcal H~}} \sum_i l(f(\bx_i),y_i) 
$$
Most approaches work by controlling and analyzing the capacity/complexity of the space $\H$. 
Many mathematical measures of function space complexity exist, including VC and fat shattering dimensions, Rademacher complexity, covering numbers (see, e.g.,~\cite{anthony2009neural}). These analyses generally  yield bounds on {\it the generalization gap}, i.e., the difference between the empirical and expected loss of classifiers. Typically, it is shown  
that the generalization gap tends to zero at a certain rate as the number of points $n$ becomes large. For example,  many of the classical bounds on the generalization gap are of the form $\left | \expect{l(f^*(\bx),y)} - L_{emp}(f^*) \right | < O^*(\sqrt{c/n})$, where $c$ is a measure of complexity of $\H$, such as VC-dimension.  Other methods, closely related to ERM, include regularization to control bias/variance (complexity) trade-off by  parameter choice, and  result in similar bounds.  Closely related implicit regularization methods, such as early stopping for gradient descent~\cite{yao2007early,raskutti2014early,camoriano2016nytro}, provide regularization by limiting the amount of computation, thus aiming to achieve better performance at a lower computational cost. 
All of these  approaches suggest trading off  accuracy (in terms of some loss function) on the training data to get performance guarantees on  the unseen test data.

In recent years we have seen impressive progress in supervised learning due, in particular, to deep neural architectures. 
These networks employ large numbers of parameters, often exceeding the size of training data by several orders of magnitude~\cite{canziani2016analysis}. This over-parametrization allows for convergence to global optima, where the training  error is zero or nearly zero. Yet these ``overfitted\footnote{We use {\it overfitting} as a purely technical term to refer to zero classification error as opposed to interpolation which has zero regression error.}'' or even interpolated networks still generalize well to  test data, a situation which seems difficult to reconcile with available theoretical analyses (as observed, e.g., in~\cite{zhang2016understanding} or, much earlier, in~\cite{breiman2018reflections}). 
There have been a number of recent efforts to understand generalization and overfitting in deep networks including~\cite{bartlett2017spectrally,liang2017fisher,poggio2018theory}. 

In this paper we  make the case that progress on understanding deep learning is unlikely to move forward until similar phenomena in classical kernel machines are recognized and understood. Kernel machines can be viewed as linear regression in infinite dimensional Reproducing Kernel Hilbert spaces (RKHS), which correspond to positive-definite kernel functions, such as Gaussian or Laplacian kernels. They can also be interpreted as two-layer neural networks with a fixed first layer.  As such, they are far more amenable to theoretical analysis than arbitrary deep networks. Yet,  despite numerous  observations  in the literature that very small values of regularization parameters (or even direct minimum norm solutions) often result in optimal performance~\cite{shalev2011pegasos, takac2013mini, zhang2016understanding,gonen2016solving,rudi2017falcon}, the systematic nature of near-optimality of kernel classifiers trained to have
zero classification error  or zero regression error  has not been recognized. We note that margin-based analyses, such as those proposed to analyze overfitting in boosting~\cite{schapire1998}, do not easily explain performance of interpolated classifiers in the presence of  label noise, as sample complexity must scale linearly with the number of data points.
We would like to point out an insightful (but seemingly little noticed) recent paper~\cite{wyner2017explaining} which proposed an alternative explanation for the success of Adaboost, much closer to our discussion here. 

Below we will show that most  bounds for smooth kernels will, indeed, diverge with increasing data. On the other hand,
empirical evidence shows consistent and robust generalization performance of ``overfitted" and interpolated classifiers even for high label noise levels.

We will discuss these and other related issues in detail, providing both theoretical results and empirical data.   
The contribution of this paper are as follows: 
\begin{itemize} 
\item {\bf Empirical properties of overfitted and interpolated  kernel classifiers.}\\
%\begin{enumerate}
1. The phenomenon of strong generalization performance of ``overfitted"/interpolated classifiers is not unique to deep networks. We demonstrate experimentally that kernel classifiers that  have zero classification or regression error on the training data, still perform well on test. We use six real-world datasets  (Section~\ref{sec:interpolated}) as well as two synthetic datasets (Section~\ref{sec:norm_bound}) to demonstrate the ubiquity of this behavior. Additionally, we observe  that regularization by early stopping provides at most a minor improvement to classifier performance.\\
%This confirms and puts into the interpolation perspective  observations previously made in the literature that very small values of regularization parameters often result in optimal performance~\cite{shalev2011pegasos,takac2013mini}.  
\noindent 2. It was recently observed in~\cite{zhang2016understanding} that 
ReLU networks trained with SGD  easily fit standard datasets with random labels, requiring only about three times as many epochs as for fitting the original labels. Thus  the fitting capacity of ReLU network function space reachable by a small number of  SGD steps is very high. In Section~\ref{sec:fit-noise} we demonstrate very similar behavior
exhibited by (non-smooth) Laplacian (exponential)  kernels, which are easily able to fit random labels.
In contrast,  as expected from the theoretical considerations of fat shattering dimension~\cite{belkin2018approximation}, it is far more computationally difficult to fit random labels using Gaussian kernels. However, we observe that the actual test performance of interpolated Gaussian and Laplacian kernel classifiers on real and synthetic data is very similar, and remains similar even with added label noise. 

%end{enumerate}

\item  {\bf Theoretical results and the supporting experimental evidence.} In Section~\ref{sec:norm_bound} we show theoretically that  performance of interpolated kernel classifiers cannot be explained by the existing generalization bounds available for kernel learning.  Specifically, we prove lower bounds on the RKHS norms of overfitted solutions for smooth kernels, showing that they must increase nearly exponentially with the data size. Since most available generalization bounds depend  polynomially on the norm of the solution, this result implies divergence of most bounds as data goes to infinity. Moreover, to the best of our knowledge, none of the existing bounds (including potential logarithmic bounds) apply to interpolated (zero regression loss) classifiers.

Note that we need an assumption that  the loss of the Bayes optimal classifier (the label noise) is non-zero. While it is usually believed that most real data have some level of label noise, it is not usually possible to ascertain this  is the case. We address this issue in two ways by analyzing (1) synthetic datasets with a known level of label noise (2) real-world datasets with additional random label noise. In both cases we see that empirical test performance of interpolated kernel classifiers decays at slightly below  the noise level, as it would, if the classifiers were nearly optimal. This finding holds even for very high levels of label noise.
We thus conclude that the existing bounds are unlikely to  provide insight into the generalization performance of  kernel classifiers. Moreover, since the empirical risk is zero, any potential non-trivial bound for the generalization gap, aiming to describe  noisy data, must have tight constants to produce a value between the (non-zero) Bayes risk and $1$. To the best of our knowledge, no examples of such bounds exist.

\end{itemize}

We will now discuss some important points, conclusions and conjectures  based on the combination of theoretical and experimental results presented in this paper.\\
% \begin{itemize}
% \item[\bf Parallels between deep and shallow architectures in performance of over-fitted classifiers.]
\noindent{\bf Parallels between deep and shallow architectures in performance of overfitted classifiers.}
There is extensive empirical evidence, including the experiments in our paper,
that ``overfitted" kernel classifiers 
demonstrate strong performance on a range of datasets.
Moreover, in Section~\ref{sec:interpolated} we see that introducing regularization (by early stopping) provides at most a modest improvement to the classification accuracy. 
%While it has been frequently noticed that choosing very small regularization parameters often leads to optimal performance.
Our findings  parallel those for deep networks discussed in~\cite{zhang2016understanding}. Considering that kernel methods can be viewed as a special case of two-layer neural network architectures, 
we conclude that deep network structure, as such, is unlikely to play a significant role in this surprising phenomenon.\\
% \item[\bf Existing bounds for kernels lack explanatory power in the over-fitted regimes.] \\
\noindent{\bf Existing bounds for kernels lack explanatory power in overfitted regimes.}
Our experimental results show that kernel classifiers demonstrate nearly optimal performance even when the label noise is known to be significant. On the other hand,  the existing bounds for overfitted/interpolated kernel methods diverge  with increasing data size in the presence of label noise. We believe that a new theory of kernel methods, not dependent on norm-based concentration bounds, is needed to understand this behavior.  

At this point we know of few candidates for such a theory. A notable (and, to the best of our knowledge, the only) example is $1$-nearest neighbor classifier, with expected loss that can be bounded asymptotically by twice the Bayes risk~\cite{cover1967nearest}, while its empirical loss (both classification and regression) is identically zero. 
We conjecture that similar ideas are needed to analyze  kernel methods and, potentially, deep learning.\\
% \item[\bf Generalization and optimization.]
\noindent{\bf Generalization and optimization.}
We observe that smooth Gaussian kernels and non-smooth Laplacian kernels have very different optimization properties. 
We show experimentally that (less smooth) Laplacian kernels easily fit standard datasets with random labels,  requiring only about twice the number of epochs needed to fit the original labels (a finding that closely parallels results recently reported for ReLU neural networks in~\cite{zhang2016understanding}). In contrast  (as suggested by the theoretical considerations of fat shattering dimension in~\cite{belkin2018approximation}) optimization by gradient descent is far more computationally demanding for  (smooth) Gaussian kernels.  On the other hand, test performance of kernel classifiers is very similar for Laplacian and Gaussian kernels, even with added label noise. Thus the generalization performance  of classifiers appear to be related to the  structural properties of the kernels (e.g., their radial structure) rather than their properties with respect  to the optimization methods, such as SGD. \\
% \item[\bf Implicit regularization versus the inductive bias.]
\noindent{\bf Implicit regularization and loss functions.}
%Implicit regularization has been proposed as an explanation for the performance of classifiers
One proposed explanation for the performance of deep networks is the idea of implicit regularization introduced by methods such as early stopping in gradient descent~\cite{yao2007early,raskutti2014early,neyshabur2014search,camoriano2016nytro}. These approaches suggest trading off some accuracy on the training data  by limiting the amount of computation, to get better performance on  the unseen test data.
It can be shown~\cite{yao2007early} 
that for kernel methods early stopping for gradient descent is effectively equivalent to traditional regularization methods, such as Tikhonov regularization. 

As interpolated kernel methods fit the labels exactly (at or close to numerical precision), implicit regularization, viewed as a  trade-off between train and test performance, cannot provide an explanation for their generalization performance. 
While overfitted (zero classification loss) classifiers can, in principle, be taking advantage of regularization by introducing regression loss not reflected in the classification error (cf.~\cite{schapire1998}), we see (Section~\ref{sec:interpolated},\ref{sec:norm_bound}) that their performance does not significantly differ from that for interpolated classifiers for which margin-based explanations to not apply. 

Another interesting point is that any strictly convex loss function leads to the same interpolated solution. Thus, it is unlikely that the choice of loss function relates to the generalization properties of classifiers\footnote{It has been long noticed that performance of kernel classifiers does not significantly depend on the choice of loss functions. For example, kernel SVM performs very similarly to kernel least square regression~\cite{zhang2004SVM}.}.

Since deep networks are also trained to fit the data exactly,  the similarity to kernel methods suggests that implicit regularization or the specifics of the loss function used in training, are not the basis of their  generalization properties. \\
\noindent{\bf Inductive bias and minimum norm solutions.} While the notions of {\it regularization} and {\it inductive bias} are frequently used interchangeably in the literature, we feel it would be useful to draw a distinction between regularization which introduces a bias on the training data and  inductive bias, which gives preferences to certain functions without affecting their output on the training data. 

While interpolated  methods fit the data exactly and thus produce no regularization,  minimum RKHS norm interpolating solutions introduce inductive bias by choosing functions with special properties. Note that infinitely many RKHS functions are capable of interpolating the data\footnote{Indeed, the space of RKHS interpolating functions is dense in the space of all functions in $L^2$!}. However,  the Representer Theorem~\cite{aronszajn1950theory}  ensures that the minimum norm interpolant is a linear combination of kernel functions supported on data points $\{K(\bx_1,\cdot),\ldots,K(\bx_n,\cdot)\}$. As we observe from the empirical results, these solutions have special generalization properties, which cannot be expected from arbitrary interpolants.  While we do not yet understand how this inductive bias leads to strong generalization properties of kernel interpolants, they are obviously related to the structural properties of kernel functions and their RKHS. It is instructive to compare this setting to 1-NN classifier. While no guarantee can be given for piece-wise constant interpolating functions in general, the specific piece-wise constant function chosen by 1-NN has certain optimality properties, guaranteeing the generalization error of at most twice the Bayes risk~\cite{cover1967nearest}. 

It is well-known that gradient descent (and, in fact, SGD) for any strictly convex loss, initialized at $0$ (or any point other point within the span of $\{K(\bx_1,\cdot),\ldots,K(\bx_n,\cdot)\}$),  converges to the minimum norm solution, which is the unique interpolant for the data within the span of the kernels functions.  On the other hand, it can be easily verified\footnote{The component of the initialization vector orthogonal to the span does not change with the iterative updates.} that GD/SGD initialized outside of the span of $\{K(\bx_1,\cdot),\ldots,K(\bx_n,\cdot)\}$ cannot converge to the minimum RKHS norm solution. 
Thus  the inductive bias corresponding to SGD with initialization at zero, is consistent with that  of the minimum norm solution. 

This view also provides a natural link to the phenomenon observed in AdaBoost training, where the test error  improves even after the classification error on train reached zero~\cite{schapire1998}. 
If we believe that the minimum norm solution (or the related maximum margin solution) has special properties, iterative optimization should progressively improve the classifier, regardless of the training set performance. Furthermore, based on this reasoning, generalizations bounds that connect empirical and expected error are unlikely to be helpful.

Unfortunately, we do not have an analogue of the Representer Theorem for deep networks.  Also, despite a number of recent attempts (see, e.g.,~\cite{neyshabur2017exploring}), it is not clear how best to construct a norm for deep networks similar to the RKHS norm for kernels. Still,  it appears likely that similarly to kernels,  the structure of neural networks in combination with algorithms, such as SGD, introduce an inductive bias\footnote{We conjecture that fully connected neural networks have inductive biases similar to those of kernel methods. 
On the other hand, convolutional networks seem to have strong inductive biases tuned to vision problems, which can be used even in the absence of labeled data~\cite{ulyanov2017deep}. }. 

We see that kernel machines have a unique analytical advantage over other powerful non-linear techniques such as boosting and deep neural networks as their minimum norm solutions can be computed analytically and analyzed using a broad range of mathematical analytic techniques. Additionally, at least for smaller data, these solutions can be computed using the classical direct methods for solving systems of linear equations. We argue that kernel machines provide a natural analytical and experimental platform for understanding inference in modern machine learning. 
%We see that kernel machines serve as a 

\noindent{\bf A remark on the importance of accelerated algorithms, hardware and SGD.}
Finally, we note that the experiments shown in this paper, particularly fitting noisy labels with Gaussian kernels,  would be difficult to conduct without fast kernel training algorithms (we used EigenPro-SGD~\cite{ma2017diving}, which provided 10-40x acceleration over the standard SGD/Pegasos~\cite{shalev2011pegasos}) combined with  modern GPU hardware. 
%We could not have written this paper even three years ago. 
By a remarkably serendipitous coincidence, small mini-batch SGD can be shown to be  exceptionally effective (nearly $O(n)$ more effective than full gradient descent) for interpolated classifiers~\cite{ma2017interpolation}. 
%The effectiveness  of an iteration of small mini-batch SGD nearly  matches a step of full gradient descent, leading to several orders of magnitude improvements in computational complexity.

%something about SGD in the interpolated regime cite our paper

% \end{itemize}
\smallskip

To summarize, in this paper we demonstrate significant parallels between the properties of deep neural networks and the classical kernel methods trained in the ``modern'' overfitted regime. Note that kernel methods can be viewed as a  special type of two-layer neural networks with a fixed first layer. Thus, we argue that more complex deep networks are unlikely to be amenable to analysis unless simpler and analytically more tractable kernel methods are better  understood. Since the existing bounds seem to provide little explanatory power for their generalization performance, new insights and mathematical analyses are needed.

% in Section 6 we take a look at some interesting properties of the Fourier features, showing that in many examples the number of features needed for performance (comparable to the original kernel) exceeds the number of data points. We also discuss the importance of dimension. 

\section{Setup}

We recall some  properties of kernel methods used in this paper. Let $K(\bx,\bz):\R^d\times\R^d\to \R$ be a positive definite kernel. Then there exists a corresponding Reproducing Kernel Hilbert Space $\H$ of functions on $\R^d$, associated to the kernel $K(x,z)$.
Given a data set $\{(\bx_i, y_i), i=1,\ldots,n\}, \bx_i\in \R^d, y_i \in \R$, let $K$ be the associated kernel matrix, $
K_{ij}=K(\bx_i,\bx_j)$ and define the minimum norm interpolant
\begin{equation}\label{eq:min_norm}
f^* ={\arg\min}_{f\in \H, ~f(\bx_i)=y_i} \|f\|_\H 
\end{equation}
Here  $\|f\|_\H$ is the RKHS norm of $f$. From the classical representer theorem~\cite{aronszajn1950theory} it follows that  $f^*$ exists (as long as no two data points $x_i$ and $x_j$ have the same features but different labels). 
Moreover, $f^*$ can be written explicitly as 
\begin{equation}\label{eq:interpolant}
f^*(\cdot) = \sum \alpha^*_i K(\bx_i,\cdot),\,\mathrm{where} ~~ (\alpha^*_1,\ldots,\alpha^*_n)^T = K^{-1} (y_1,\ldots, y_n)^T
\end{equation}
The fact that matrix $K$ is invertible follows directly from the positive definite property of the kernel. It is easy to verify  that indeed $f(\bx_i) = y_i$ and hence the function $f^*$ defined by Eq.~\ref{eq:interpolant} {\it interpolates} the data. 

An equivalent way of writing Eq.~\ref{eq:min_norm} is to observe that $f^*$ minimizes $\sum l(f(\bx_i), y_i)$ for any non-negative loss function $l(\tilde{y},y)$, such that $l(y,y)=0$. If $l$ is strictly convex, e.g., the square loss $l(f(\bx_i),y_i) = (f(\bx_i)-y_i)^2$, then $\balpha^*$ is the unique vector satisfying
\begin{equation}
\balpha^*=\arg \min_{\alpha\in \R^n} \sum_{i=1}^n l\left(\left(\sum_{j=1}^n \alpha_i K(\bx_j,\bx_i)\right),y_i\right) 
\end{equation}

This is an important formulation as it allows us to define $f^*$ in terms of an unconstrained optimization problem on a finite-dimensional space $\R^n$.  In particular, iterative methods 
can be used to solve for $\alpha^*$, often obviating the need to  invert the $n \times n$ matrix $K$. Matrix inversion generally requires $n^3$ operation, which is prohibitive for large data. 

We also recall that the RKHS norm of an arbitrary function of the form $f(\cdot)=  \sum \alpha_i K(\bx_i,\cdot)$ can be easily computed as
$$
\|f\|_H^2 = \langle \balpha , K \balpha\rangle = \sum_{ij} \alpha_i K_{ij} \alpha_j
$$

In this paper we will primarily use the  popular smooth  Gaussian kernel $K(\bx,\bz)=\exp{\left(-\frac{\|\bx-\bz\|^2}{2\sigma^2}\right)}$ as well as non-smooth Laplacian  (exponential) kernel $K(\bx,\bz)=\exp{\left(-\frac{\|\bx-\bz\|}{\sigma}\right)}$. 
We will use both direct linear systems solvers and iterative methods.\\
\paragraph{Interpolation versus ``overfitting''.} In this paper we will refer to  
 classifiers as {\it interpolated} if their square loss on the training error is zero or close to zero.  We will call classifiers {\it overfitted} if the same holds for classification loss (for the theoretical bounds we will additionally require a small fixed margin on the training data). Notice that while interpolation implies overfitting, the converse does not hold.

%We will also provide some results for   While the performance of these kernels is similar they have rather different optimization properties. We will discuss this later in the paper.  

\section{Generalization Performance of Overfitted/Interpolating Classifiers}
\label{sec:interpolated}

\begin{figure}[!ht]
  \centering
  \begin{minipage}[l]{0.32\textwidth}
    \includegraphics[width=\textwidth]{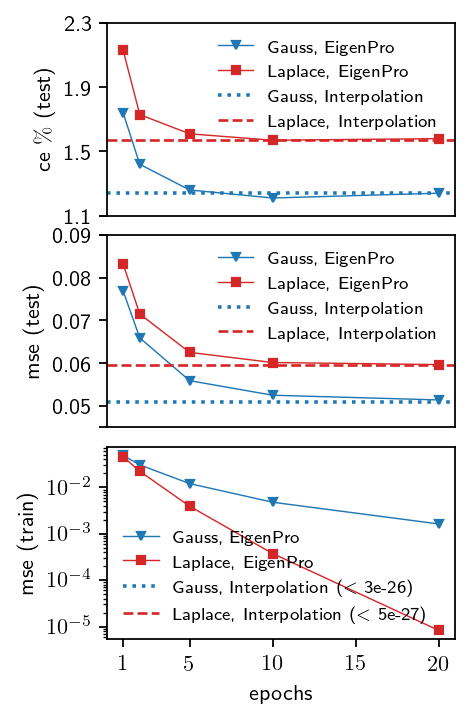}
    \subcaption{MNIST}
  \end{minipage}
  \hfill
  \begin{minipage}[l]{0.32\textwidth}
    \includegraphics[width=\textwidth]{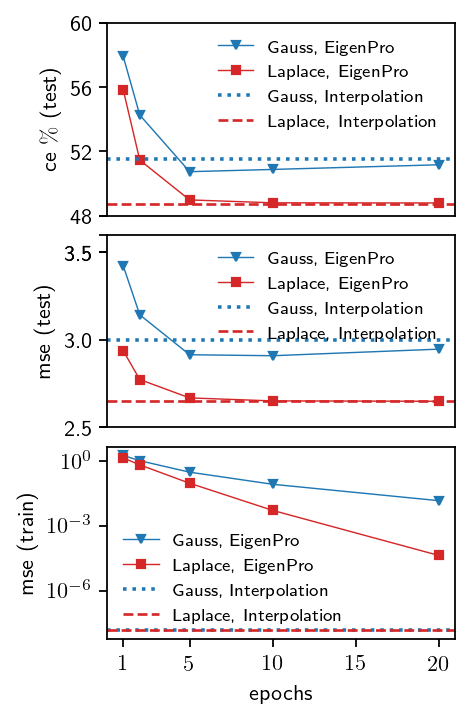}
    \subcaption{CIFAR-10}
  \end{minipage}
  \hfill
  \begin{minipage}[l]{0.32\textwidth}
    \includegraphics[width=\textwidth]{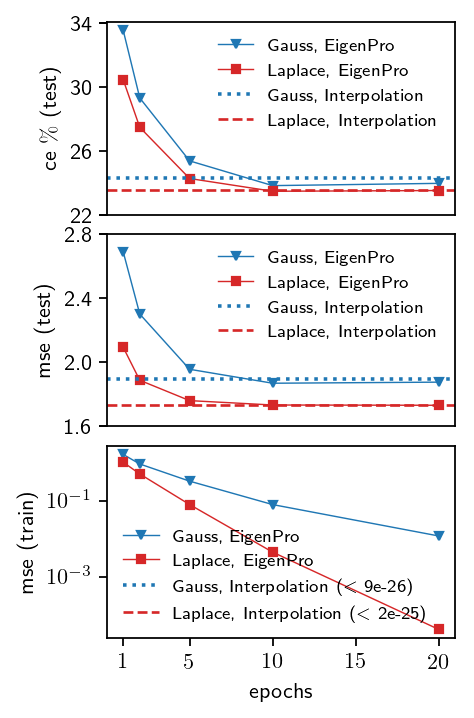}
    \subcaption{SVHN ($2 \cdot 10^4$ subsamples)}
  \end{minipage}
  
  \begin{minipage}[l]{0.32\textwidth}
    \includegraphics[width=\textwidth]{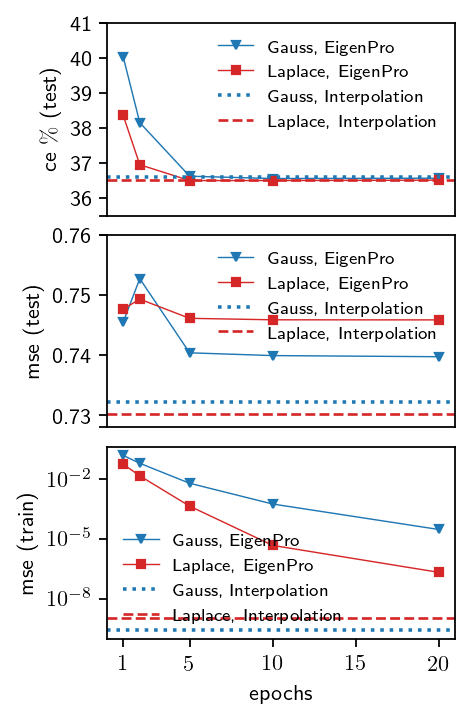}
    \subcaption{TIMIT ($5 \cdot 10^4$ subsamples)}
  \end{minipage}
  \hfill
  \begin{minipage}[l]{0.32\textwidth}
    \includegraphics[width=\textwidth]{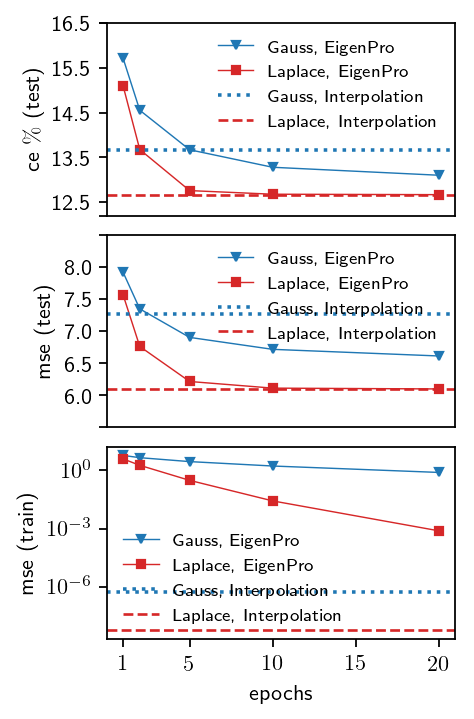}
    \subcaption{HINT-S ($2 \cdot 10^4$ subsamples)}
  \end{minipage}
  \hfill
  \begin{minipage}[l]{0.31\textwidth}
     \includegraphics[width=\textwidth]{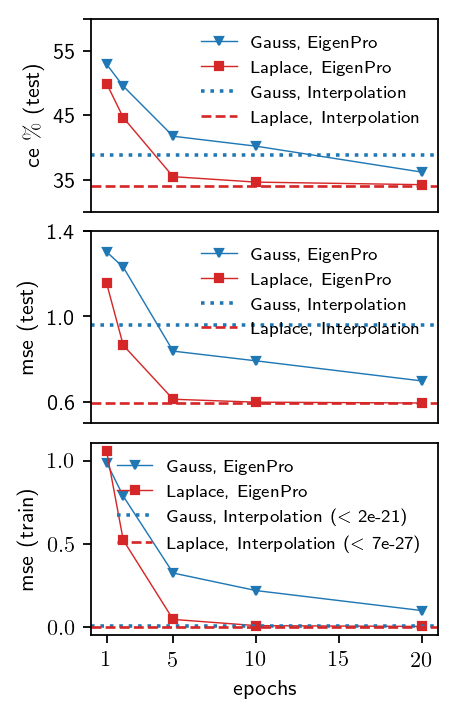}
    \subcaption{20 Newsgroups}
  \end{minipage}
  \caption{Comparison of approximate classifiers trained by EigenPro-SGD~\cite{ma2017diving} and interpolated classifiers obtained from direct method for kernel least squares regression.}
  \label{fig:interpolate}
  \vspace{-4mm}
  \begin{flushleft}
  \small $\dagger$ All methods achieve $0.0\%$ classification error on training set. $\ddagger$ We use subsampled dataset to reduce the computational complexity and to avoid numerically unstable direct solution.  
  \end{flushleft}
\end{figure}

In this section we establish empirically that interpolating kernel methods provide strong performance on a range of standard datasets (see Appendix~\ref{app:expr-setup} for dataset descriptions) both in terms of regression and classification. To construct kernel classifiers we use iterative EigenPro-SGD method~\cite{ma2017diving}, which is an accelerated version of SGD in the kernel space (cf.~Pegasos~\cite{shalev2011pegasos}). This provides a highly efficient implementation of kernel methods and, additionally, a setting parallel to neural net training using SGD. Our experimental results are summarized in Fig.~\ref{fig:interpolate} (see Appendix~\ref{app:exper-results} for 
full numerical results including the classification accuracy on the training set).

We see that as the number of epochs increases, training square loss ({\bf mse}) approaches zero\footnote{The training classification error (not shown), is similarly small. After  20 epochs of EigenPro it is zero for all datasets, except for 20 Newsgoups with Gaussian/Laplace kernels and HINT-S with Gaussian kernel (see Appendix~\ref{app:exper-results}).}. On the other hand, the test error, both regression ({\bf mse}) and classification ({\bf ce}) remains very stable and, in most cases (in all cases for Laplacian kernels), keeps decreasing and then stabilizes. We thus observe that early stopping regularization~\cite{yao2007early,raskutti2014early}  provides a  small or no benefit in terms of either classification or regression error. 

For comparison, we also show the performance of  interpolating solutions given by  
Eq.~\ref{eq:interpolant} and solved using {\it direct methods}.
%\footnote{We used sub-sampling for SVHN, TIMIT, and HINT-S for computational feasibility and numerical stability}. 
As expected, direct solutions always provide a highly accurate interpolation for the  training data with the error in most cases close to numerical precision. Remarkably, we see that in all cases performance of the interpolated solution on {\it test} is either optimal or close to optimal both in terms of both regression  and classification error. 

Performance of overfitted/interpolated  kernel classifiers closely parallels behaviors of deep  networks noted in~\cite{zhang2016understanding} which fit the data exactly (only the classification error is reported there, other references also report MSE~\cite{chaudhari2016entropy, huang2016densely, sagun2017empirical, bartlett2017spectrally}). 
We note that  observations of unexpectedly strong performance of overfitted classifiers have been made before. For example, in kernel methods it has been observed on multiple occasions that very small values of regularization parameters frequently lead to optimal performance~\cite{shalev2011pegasos, takac2013mini}. 
Similar observations  were also made for Adaboost and Random Forests~\cite{schapire1998} (see~\cite{wyner2017explaining} for a recent and quite different take on that).
However, we have not seen recognition or systematic exploration of this (apparently 
ubiquitous) phenomenon for kernel methods, and, more generally, in connection to interpolated classifiers and generalization with respect to the square loss.
%Margin-based explanations for strong performance of over-fitted classifiers 
%typically do not apply to generalization in terms of the square loss. 

In the next section we  examine in detail why the existing margin bounds are not likely to provide  insight into the generalization properties of classifiers in  overfitted and interpolated regimes. 

\section{Existing  Bounds   Provide No Guarantees for Interpolated Kernel Classifiers}
\label{sec:norm_bound}

In this section we  discuss  theoretical considerations related to  generalization bounds for kernel classification and regression corresponding to smooth kernels. We also provide  further supporting experimental evidence. 
Our main theoretical result shows that the norm of overfitted kernels classifiers increases nearly exponentially with the data size as long as the error of the Bayes optimal classifier (the label noise) is non-zero.  Most of the available generalizations bounds depend at most polynomially on the RKHS norm, and hence diverge to infinity as data size increases and none apply to interpolated classifiers. On the other hand, we will see that the empirical performance of interpolated classifiers remains nearly optimal, even with added label noise.  

Let $(\bx_i,y_i) \in \Omega\times \{-1,1\}$ be a labeled dataset, $\Omega\subset \R^d$ a bounded domain, and let the data be chosen from some probability measure $P$ on $\Omega\times \{-1,1\}$. We will assume that the loss of the Bayes optimal classifier (the label noise) is not $0$, i.e.,  $y$ is not a deterministic function of $\bx$ on a subset of non-zero measure.

% As before, we will say that $h \in \H$ {\it overfits} the data, if it achieves zero classification loss. For the norm bound we additionally require  some arbitrarily small fixed margin $t$, $\forall_i y_i h(\bx_i) > t >0$ for at least a fixed portion of the training data.  This condition is necessary as overfitted classifiers with arbitrarily small norm can be obtained by simply scaling any interpolating solution. 
% Note that the margin condition is far weaker than interpolation, which requires $h(\bx_i) = y_i$ for all data points.  
% Moreover, while interpolation can be unstable numerically, verifying the margin is a far more  robust numerical procedure. 
 %thus our bound will be  trivially true for  interpolated solutions.  
%Additionally, while interpolation can be unstable numerically, verifying the margin is a far more  stable numerical procedure.

We will say that $h \in \H$ {\it $t$-overfits} the data, if it achieves zero classification loss, and, additionally,
$\forall_i y_i h(\bx_i) > t >0$ for at least a fixed portion of the training data.  This condition is necessary as zero classification loss classifiers with arbitrarily small norm can be obtained by simply scaling any interpolating solution. 
The margin condition is far weaker than interpolation, which requires $h(\bx_i) = y_i$ for all data points. 

We now provide a lower bound on the function norm of $t$-overfitted classifiers in RKHS corresponding to Gaussian kernels\footnote{The results also apply to other classes of smooth kernels, such as inverse multi-quadrics.}.

\begin{theorem}{\label{th:norm}} Let $(\bx_i,y_i),i=1,\ldots,n$ be data sampled from $P$ on $\Omega\times \{-1,1\}$. Assume that $y$ is not a deterministic function of $x$ on a subset of non-zero measure.  Then, with high probability, any $h$  that $t$-overfits the data, satisfies
$$
\|h\|_\H > Ae^{B\,n^{1/d}}
$$
for some constants  $A,B>0$ depending on $t$. 
\end{theorem}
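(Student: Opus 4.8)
The plan is to derive the bound from three facts about the Gaussian RKHS and the data: (i) functions with moderate norm are well approximated on the bounded domain $\Omega$ by \emph{low‑degree polynomials}; (ii) low‑degree polynomials form a class of \emph{low complexity}; and (iii) \emph{label noise} prevents any low‑complexity classifier from fitting the sample with a fixed margin. Quantifying the trade‑off between degree and norm then forces the norm to be exponentially large. As a first reduction I would localize: since $y$ is not deterministic, there is a set $\Omega_0\subset\Omega$ of positive measure and a $\gamma_0>0$ on which the Bayes error is at least $\gamma_0$; with high probability $\Omega_0$ contains $\Theta(n)$ sample points, and the ``fixed portion'' of the data on which $h$ has margin $>t$ meets $\Omega_0$ in $\Theta(n)$ points. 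So it suffices to bound $\|h\|_{\mathcal H}$ for an $h$ that classifies every training point correctly and has $y_ih(\bx_i)>t$ for $\Theta(n)$ points inside the noisy region $\Omega_0$.

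Next, using the Fourier description $\|h\|_{\mathcal H}^2\asymp\int|\widehat h(\omega)|^2e^{\sigma^2\|\omega\|^2/2}\,d\omega$ and truncating the Taylor series of $e^{i\langle\omega,\bx\rangle}$ at total degree $D$, one gets that any $h$ with $\|h\|_{\mathcal H}\le M$ is, on $\Omega$, within $C_\Omega M q^{D}$ in sup‑norm of a polynomial $p$ of degree $\le D$, for some $q=q(\sigma,\Omega)<1$ (and $\|h\|_\infty\le\|h\|_{\mathcal H}\le M$ since $K(\bx,\bx)=1$). Choosing $D=\lceil C_1\log(M/t)\rceil$ makes this error $<t/2$, so $p$ still classifies the $\Theta(n)$ noisy points correctly with margin $t/2$ and has $\|p\|_{\infty,\Omega}\le 2M$. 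Now degree‑$\le D$ polynomials in $d$ variables span a space of dimension $\binom{D+d}{d}\le(D+1)^d$, so covering the ball $\{\,\|p\|_{\infty,\Omega}\le2M\,\}$ at scale $t/4$ bounds the number of realizable margin‑$t/2$ sign patterns on any $n$ points by $(C'M/t)^{(D+1)^d}$. Since the Bayes error on $\Omega_0$ is $\ge\gamma_0$, for every fixed classifier the probability of correctly labelling all $\Theta(n)$ noisy points is $\le(1-\gamma_0)^{\Theta(n)}$; a union bound over the cover shows that, with high probability, such a $p$ exists only if $(D+1)^d\log(C'M/t)\gtrsim n$.

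Feeding $D\asymp\log(M/t)$ into the last inequality gives $(\log(M/t))^{d}\gtrsim c\,n$ (after absorbing the extra logarithmic factor, or after sharpening the count of margin‑$t/2$ sign patterns via a fat‑shattering/Sauer–Shelah estimate), whence $\log\|h\|_{\mathcal H}\gtrsim n^{1/d}$, i.e. $\|h\|_{\mathcal H}>Ae^{Bn^{1/d}}$ with $A,B>0$ depending on $t$ (and on $\gamma_0,\sigma,\Omega$). A second, essentially equivalent route runs through the kernel matrix: from the representer decomposition, $\|h\|_{\mathcal H}^2\ge\bv^\top K^{-1}\bv$ for $\bv=(h(\bx_1),\dots,h(\bx_n))^\top$; combining the super‑exponential eigenvalue decay $\lambda_m(K)\lesssim n\,e^{-cm^{1/d}}$ of Gaussian Gram matrices (valid for $m\lesssim n$ with high probability) with the fact that the top $\Theta(n)$ eigenvectors, being samples of smooth eigenfunctions, cannot reproduce the noisy label pattern, one gets $\|\bv-P_m\bv\|^2=\Omega(n)$ and hence the same conclusion.

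The main obstacle, in either route, is the quantitative form of step (iii): one needs the count of margin‑$t/2$ sign patterns (equivalently, the fat‑shattering dimension of $\{h:\|h\|_{\mathcal H}\le M\}$ at scale $t$) to stay $e^{o(n)}$ while the approximating degree stays only logarithmic in $M$ — i.e. the sharp ``a class of fat‑shattering dimension $F$ at scale $t$ cannot interpolate a margin‑$t$, Bayes‑error‑$\gamma_0$ sample unless $F\gtrsim\gamma_0^2 n$'' paired with the classical bound $\mathrm{fat}_t(\{h:\|h\|_{\mathcal H}\le M\})\lesssim(\log(M/t))^{d}$ for Gaussian kernels. A secondary, more bookkeeping‑type difficulty is discharging the ``fixed portion'' hypothesis, i.e. verifying that the margin‑$t$ points really do meet the label‑noise region in a linear number of points; and one should be careful, in the spectral route, that the eigenvalue decay holds for the empirical Gram matrix for indices $m$ growing linearly in $n$.
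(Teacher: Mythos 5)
Your proposal follows the same conceptual skeleton as the paper's proof: the complexity of a norm-$R$ ball in the Gaussian RKHS grows only polylogarithmically in $R$, while fitting a sample with non-trivial margin in the presence of label noise forces that complexity to be linear in $n$; combining the two gives $\log^d R \gtrsim n$. The paper executes this in two cited steps: a classical uniform deviation bound for the margin-$t$ hinge loss over $B_R=\{f\in\H:\|f\|_\H<R\}$ in terms of its fat-shattering dimension $V_\gamma(B_R)$ (which, together with the nonzero expected hinge loss under label noise and the zero empirical hinge loss of a $t$-overfitting $h$, forces $V_\gamma(B_R)\gtrsim n$), and the bound $V_\gamma(B_R)=O\left(\log^d(R/\gamma)\right)$ for Gaussian kernels from the cited work of Belkin (2018), which converts this into $R\ge Ae^{Bn^{1/d}}$. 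Your primary route essentially re-derives the second ingredient from scratch — polynomial approximation at degree $D\asymp\log(M/t)$ is exactly the mechanism behind that fat-shattering bound — and replaces the uniform-convergence step by a localization, covering, and union-bound argument against the $(1-\gamma_0)^{\Theta(n)}$ probability that a fixed classifier fits the noisy points. This buys a self-contained argument with explicit constants, but as written the naive cover of the degree-$D$ polynomial ball costs an extra $\log(M/t)$ factor and yields only $\log^{d+1}(M/t)\gtrsim n$, i.e. exponent $n^{1/(d+1)}$; the sharpening you yourself flag (work directly with the fat-shattering dimension and a standard deviation bound) is precisely the paper's proof. Your secondary spectral route through $\bv^\top K^{-1}\bv$ and Gram-matrix eigenvalue decay does not appear in the paper and would still need the claim that the top eigenvectors cannot reproduce the noisy label pattern made rigorous. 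Finally, the ``fixed portion'' bookkeeping you identify is a genuine subtlety, but note the paper glosses it as well: its proof takes the empirical margin-hinge loss of a $t$-overfitting $h$ to be exactly zero, i.e. margin $t$ on all training points.
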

\begin{proof} 
Let $B_R = \{f \in \H, \|f\|_\H < R\} \subset \H$ be a ball of radius $R$ in the RKHS $\H$. We will prove that with high probability $B_R$ contains no functions that  $t$-overfit the data, unless $R$ is large, which will imply our result. 

Let $l$ be the hinge loss with margin $t$: $l(f(\bx),y)=(t - yf(\bx))_{+}$. 
%We first observe that since $h$ overfits the data, than  $\frac{1}{n}\sum_i l(h(x_i),y_i) = 0$. 
Let $V_\gamma(B_R)$ be the fat shattering dimension of the function space $B_R$ with the parameter $\gamma$. By the classical results on  fat shattering dimension (see,e.g.,\cite{anthony2009neural})
%we have that with high probability the following uniform bound holds
$\exists{C_1, C_2 > 0}$ such that with high probability $\forall_{f\in B_R}$:
$$
% \forall_{f\in B_R} ~~
\left|\frac{1}{n}\sum_i l(f(\bx_i),y_i) - \expectp{P}{l(f(\bx),y)}\right| \le C_1\gamma + C_2\sqrt{\frac{V_\gamma (B_R)}{n}}
%, ~C_1, C_2 > 0
$$ 
Since $y$ is not a deterministic function of $x$ on some subset of non-zero measure, $\expectp{P}{l(f(\bx),y)}$ is non-zero. Fix $\gamma$ to be a positive number, such that  $C_1 \gamma < \expectp{P}{l(f(\bx),y)}$.

Suppose now that a function $h \in B_R$ $t$-overfits the data. Then  $\frac{1}{n}\sum_i l(h(\bx_i),y_i) = 0$ and hence 
$$
0< \expectp{P}{l(f(\bx),y)} - C_1\gamma < C_2\sqrt{\frac{V_\gamma (B_R)}{n}}
$$ 

%We obtain that for some $C>0$, ${V_\gamma (B_R)}>Cn$
Thus the ball $B_R$ with high probability contains no function that $t$-overfits the data unless
$$
{V_\gamma (B_R)} > \frac{n}{C_2}\left(\expectp{P}{l(f(\bx),y)} - C_1\gamma\right)^2
$$

On the other hand,~\cite{belkin2018approximation} gives a bound on the $V_\gamma$ dimension of the form $V_\gamma (B_R) < O\left(\log^d\left(\frac{R}{\gamma}\right)\right)$. 
% \begin{equation}\label{eq:vgamma}
% O\left(\log^d\left(\frac{R}{\gamma}\right)\right) > V_\gamma (B_R) > O(n) 
% \end{equation}
Expressing $R$ in terms of $V_\gamma (B_R)$, we see that $B_R$ with high probability contains no function that $t$-overfits the data unless 
$R$ is at least  $Ae^{B\,n^{1/d}}$ for some $A,B>0$. That completes the proof.
\end{proof}

\noindent{\bf Remark.} The bound in~Eq.~\ref{eq:min_norm} applies to any $t$-overfitted classifier, independently of the algorithm or loss function. 

We will now briefly discuss the bounds available for kernel methods. Most of the available bounds for kernel methods (see, e.g., \cite{steinwart2008support, rudi2015less}) are of the following (general) form:
\begin{equation*}
\begin{split}
\left|\frac{1}{n}\sum_i l(f(\bx_i),y_i) - \expectp{P}{l(f(\bx),y)} \right| \le 
C_1 + C_2\frac{\|f\|_\H^\alpha}{n^\beta}, ~~~C_1,C_2,\alpha,\beta \ge 0
\end{split}
\end{equation*}
Note that the regularization bounds, such as those for Tikhonov regularization, are also of similar form  as the choice of the regularization parameter implies an upper bound on the RKHS norm.  We see that our super-polynomial lower bound  on the norm $\|f\|_\H$ in Theorem~\ref{th:norm} implies that the right hand of this inequality diverges to infinity for any overfitted classifiers, making the bound trivial. There are some bounds logarithmic in the norm, such as the bound for the fat shattering in~\cite{belkin2018approximation} (used above) and  eigenvalue-dependent  bounds, which are potentially logarithmic, e.g., Theorem 13 of~\cite{goel2017eigenvalue}. However, as all of these bounds include a non-zero accuracy parameter, they do not apply to interpolated classifiers. 
Moreover, to account for the experiments with high label noise (below), any potential bound must have tight constants.  We do not know of any complexity-based bounds with this property. It is not clear such bounds exist.

%We will now address the question of the validity of   our assumption that $y$ is not a deterministic function of $x$.

\subsection{Experimental validation}
%\vskip-10pt
\noindent{\bf Zero label noise?}
A potential explanation for the disparity between  the consequences of lower norm bound in Theorem~\ref{th:norm} for classical generalization results and the performance observed in actual data, is the possibility that the error rate of the Bayes optimal classifier  (the ``label noise'') is is zero (e.g.,~\cite{soudry2017implicit}). 
Since our analysis relies on
% $E_P\, l(f(x),y)>0$
$\expectp{P}{l(f(\bx),y)}>0$, the lower bound  in Eq.~\ref{eq:min_norm} does not hold if $y$ is a deterministic function\footnote{Note that even when $y$ is a deterministic function of $x$, the norm of the interpolated solution will diverge to infinity unless $y(\bx) \in \H$. Since $y(\bx)$ for classification is discontinuous, $y(\bx)$ is never in RKHS for smooth kernels. 
However, in this case, the growth of the norm of the interpolant as a function of $n$ requires other techniques to analyze. } of $\bx$.
Indeed, many classical bounds are available for ``overfitted'' classifiers  under zero label noise condition.  
For example, if  two classes are linearly separable, the classical  bounds (including those for the Perceptron algorithm) apply  to  linear classifiers with zero loss.
To resolve this issue, we provide experimental results demonstrating that near-optimal performance for overfitted kernel classifiers persists even for high levels of label noise. Thus, while  classical bounds may describe  generalization in zero noise regimes, they  cannot explain performance in noisy regimes. 
 We provide several lines of evidence: 
\begin{enumerate}
\item We study synthetic datasets, where the noise level is known a priori, showing that overfitted and interpolated classifiers consistently achieve error close to that of the Bayes optimal classifier, even for high noise levels.
\item By adding label noise to  real-world datasets we can guarantee non-zero Bayes risk. However, as we will see, performance of overfitted/interpolated kernel methods decays at or below the noise level, as it would for the Bayes optimal classifier.
\item We show that (as expected) for ``low noise'' synthetic and real datasets, adding 
small amounts of label noise leads to dramatic increases in the norms of overfitted solutions but only slight decreases in accuracy. For ``high noise'' datasets, adding label noise makes little difference for the norm but a similar decrease in classifier accuracy, consistent with the noise level.
\end{enumerate}
We first need the following (easily proved) proposition.
\begin{Proposition}{\it
 Let $P$ be a multiclass probability distribution  on $\Omega\times \{1,\ldots,k\}$. Let $P_\epsilon$ be the same distribution with the $\epsilon$ fraction of the labels flipped at random with equal probability. Then the following holds:\\
1. The Bayes optimal classifier $c^*$ for $P_\epsilon$ is the same as the Bayes optimal classifier for $P$. \\
2. The error rate ($0-1$ loss) 
\vskip-5pt
\begin{equation} \label{eq:error_opt}
{P_{\epsilon}} ({c^*(\bx) \ne y}) = \epsilon\frac{k-1}{k} + (1-\epsilon) P (c^*(\bx) \ne y)
\end{equation}

}
\end{Proposition}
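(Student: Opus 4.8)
The plan is to reduce the statement to a computation with the class-posterior functions $\eta_j(\bx) := P(y = j \mid \bx)$. First I would make the noise model precise: draw $(\bx, y) \sim P$ and, independently, a flip indicator $F$ with $\mathbb{P}(F=1) = \epsilon$; the sample from $P_\epsilon$ is then $(\bx, \tilde y)$, where $\tilde y = y$ on $\{F = 0\}$ and $\tilde y$ is uniform on $\{1,\dots,k\}$ (independent of $\bx$ and $y$) on $\{F = 1\}$. Conditioning on $F$ and using independence immediately gives, for every $j$ and every $\bx$,
\[
P_\epsilon(\tilde y = j \mid \bx) = (1-\epsilon)\,\eta_j(\bx) + \frac{\epsilon}{k}.
\]

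For part~1, observe that the right-hand side is, for each $j$, the \emph{same} affine function $t \mapsto (1-\epsilon)t + \epsilon/k$ of $\eta_j(\bx)$; when $\epsilon < 1$ this function is strictly increasing and its intercept does not depend on $j$, so for each $\bx$ the full ordering of $\{\eta_1(\bx),\dots,\eta_k(\bx)\}$ — ties included — is preserved. Hence $\argmax_j P_\epsilon(\tilde y = j\mid\bx) = \argmax_j \eta_j(\bx)$, i.e. the Bayes optimal classifier $c^*$ of $P$ is Bayes optimal for $P_\epsilon$ as well. (When $\epsilon = 1$ all posteriors equal $1/k$ and every classifier, including $c^*$, is Bayes optimal, so the claim is trivial.)

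For part~2, rather than integrating the posterior identity I would compute the error rate by conditioning on $F$. On $\{F = 0\}$ the pair $(\bx, \tilde y)$ has the same law as $(\bx, y)$ under $P$ (by independence of $F$), so $\mathbb{P}(c^*(\bx) \ne \tilde y \mid F = 0) = P(c^*(\bx) \ne y)$. On $\{F = 1\}$ the label $\tilde y$ is uniform on $\{1,\dots,k\}$ and independent of $\bx$, hence $\mathbb{P}(c^*(\bx) = \tilde y \mid \bx, F = 1) = 1/k$ for every $\bx$, giving $\mathbb{P}(c^*(\bx) \ne \tilde y \mid F = 1) = (k-1)/k$. The law of total probability then yields
\[
P_\epsilon(c^*(\bx) \ne y) = \epsilon\,\frac{k-1}{k} + (1-\epsilon)\,P(c^*(\bx) \ne y),
\]
which is Eq.~\ref{eq:error_opt}. (Equivalently one can plug the posterior identity into $P_\epsilon(c^*(\bx)\ne y) = 1 - \mathbb{E}_\bx\!\big[P_\epsilon(\tilde y = c^*(\bx)\mid\bx)\big]$ and simplify $1 - (1-\epsilon)\big(1 - P(c^*(\bx)\ne y)\big) - \epsilon/k$.)

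There is no real obstacle here; the only point that needs care is fixing the exact flipping model, since ``flipped at random with equal probability'' could mean a uniform draw over all $k$ labels or a uniform draw over the $k-1$ wrong labels, and only the former produces the constant $(k-1)/k$ in the statement. I would state this choice explicitly up front so that the conditional computation on $\{F = 1\}$ is unambiguous.
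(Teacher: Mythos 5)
Your proof is correct. The paper states this proposition without proof (calling it ``easily proved''), and your argument --- formalizing the corruption as a mixture with an independent flip indicator $F$, observing that the corrupted posterior $(1-\epsilon)\eta_j(\bx) + \epsilon/k$ is the same increasing affine function of $\eta_j(\bx)$ for every $j$, and then conditioning on $F$ for the error rate --- is exactly the standard computation the paper implicitly relies on. Your clarification of the noise model (uniform over all $k$ labels, so a ``flip'' may return the original label) is the right reading: it is the only one that produces the constant $\frac{k-1}{k}$ in Eq.~\ref{eq:error_opt}, and it is confirmed by the paper's remark that for $k=2$ and noiseless $P$ the Bayes risk of $P_\epsilon$ is $\epsilon/2$.
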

\vskip-5pt
\noindent{\bf Remark.} Note that adding label noise to a probability distribution increases the error rate of the optimal classifier by at most $\epsilon$. In particular, when $k=2$ and $P$ has no label noise, the Bayes risk of $P_\epsilon$ is simply $\epsilon/2$. In addition, the loss of the Bayes optimal classifier is linear in $\epsilon$. 

%is large we have 
% $$
% {P_{\epsilon}} ({c^*(x) \ne y})- P (c^*(x) \ne y)\lesssim \epsilon(1 -  P (c^*(x) \ne y))
% $$ 

\noindent {\bf A note on the experimental setting.}  In the experimental results  in this section we only use (smooth) Gaussian kernels to provide a setting consistent with Theorem~\ref{th:norm}. Overfitted classifiers are trained to have zero classification error using EigenPro\footnote{We  stop iteration when classification error reaches zero.}.  Interpolated classifiers are constructed by solving Eq.~\ref{eq:interpolant} directly\footnote{As interpolated classifiers are constructed by solving a  poorly conditioned system of equation,  the reported norm should be taken as  a lower bound on the actual norm.}.  

\begin{wrapfigure}{r}{0.3\textwidth}
  \centering
  \vspace{-5mm}
  \begin{minipage}[b]{.3\textwidth}
    \includegraphics[width=\textwidth]{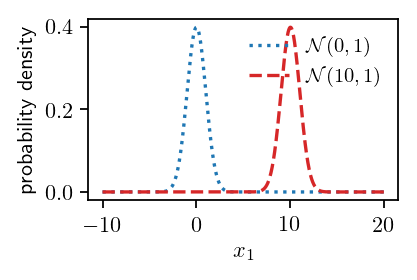}
  \end{minipage}
  \vspace{-10mm}
%   \caption{Separable}
  \label{fig:ndist-sep}
\end{wrapfigure}
\noindent {\bf Synthetic dataset 1: Separable classes+noise.}
We start by considering a synthetic dataset in $\R^{50}$.  
Each data point $(\bx, y)$ is sampled as follows:
randomly sample label $y$ from $\{-1, 1 \}$ with equal probability;
for a given $y$, draw the first coordinate of  $\bx = (x_1, \ldots, x_{50}) \in \mathbb{R}^d$ 
from a univariate normal distribution conditional on the label and the rest uniformly from $[-1,1]$:
\begin{equation}\label{eq:synth-dist}
\begin{split}
x_1 &\sim \left\{\begin{matrix}
\mathcal{N}(0, 1), & \textnormal{if  } y = 1\\ 
\mathcal{N}(10, 1), & \textnormal{~~~if  } y = -1
\end{matrix}\right.  
~~~~~~x_2 \sim {U}(-1, 1), \ldots, x_{50} \sim {U}(-1, 1)
\end{split}
\end{equation}

\begin{wrapfigure}{r}{0.6\textwidth}
  \vspace{-8mm}
  \centering
  \includegraphics[width=0.6\textwidth]{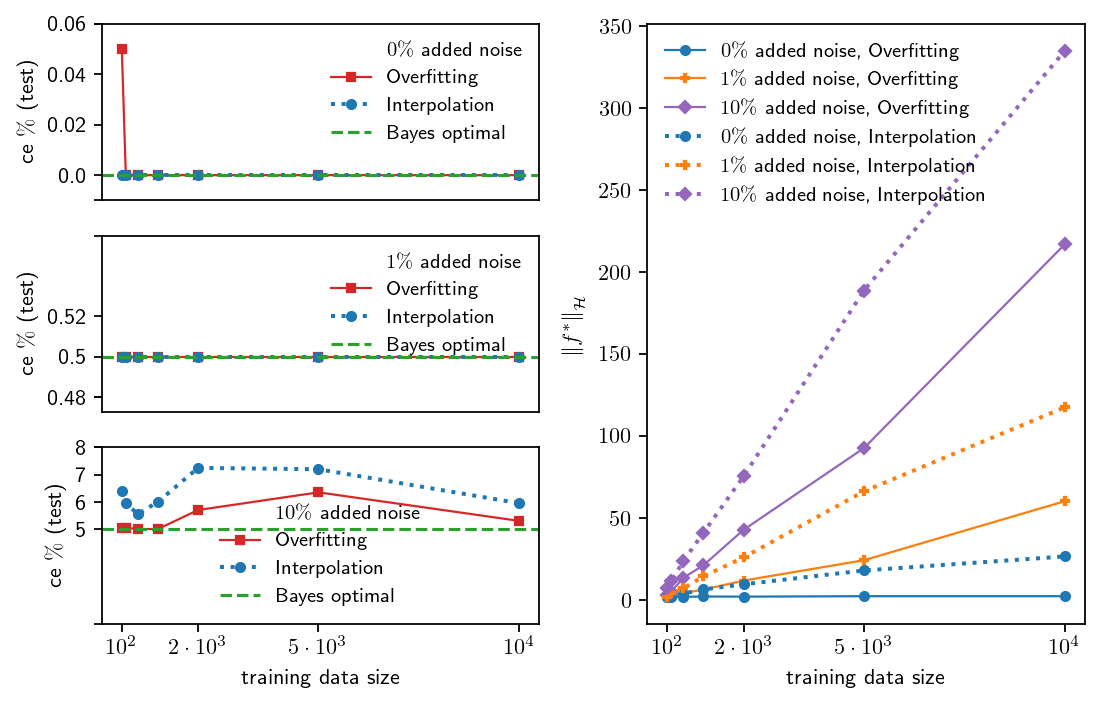}
  \vspace{-9mm}
  \caption{Overfitted/interpolated Gaussian kernel classifiers. Synthetic dataset 1. Left top to bottom: test error $0\%$, $1\%$, $10\%$ added label noise. Right: RKHS norms.}  
  \label{fig:synth-0-10}
  \vspace{-5mm}
\end{wrapfigure}
We see that the classes are (effectively) linearly separable, with the Bayes optimal classifier defined as
$c^*(\bx)=\sign (x_1 -5)$.

In Fig.~\ref{fig:synth-0-10}, we show  classification error rates for Gaussian kernel with a fixed kernel  parameter. We compare classifiers constructed to overfit the data by driving the classification error to zero iteratively (using EigenPro) to the direct numerical interpolating solution. We see that, as expected for linearly separable data, an overfitted solution achieves optimal 

\noindent accuracy with a small norm. The interpolated solution has a larger norm yet performs identically. 
On the other hand adding just $1\%$ label noise  increases the norm by more than an order of magnitude. However both overfitted and interpolated kernel classifiers still perform at $1\%$, the Bayes optimal level. Increasing the label noise to $10\%$ shows a similar  pattern, although the classifiers become slightly less accurate than the Bayes optimal. We see that there is little connection between the solution norm and the classifier performance. \newline
\indent Additionally, we observe that the norm of either solution increases quickly with the number of data points, a finding consistent with Theorem~\ref{th:norm}.\newline

\begin{wrapfigure}{r}{0.33\textwidth}
  \centering
  \vspace{-6mm}
  \begin{minipage}[b]{.3\textwidth}
    \includegraphics[width=\textwidth]{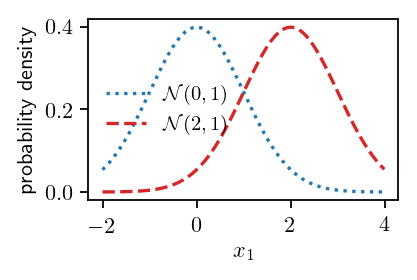}
  \end{minipage}
  \vspace{-1.6cm}
%   \caption{Non-separable}
  \label{fig:ndist-nsep}
\end{wrapfigure}
\noindent {\bf Synthetic dataset 2: Non-separable classes.}
Consider the same setting as above, except that the Gaussian classes are moved within two standard deviations of each other
% (Fig~\ref{fig:ndist-nsep})
(right figure).
The classes are no longer separable, with the optimal classifier error of approximately $15.9\%$.\newline

 \begin{wrapfigure}{r}{0.6\textwidth}
%\begin{figure}[!ht]
  \centering
  \begin{minipage}[b]{0.6\textwidth}
   \vspace{-5mm}
  \centering
  \includegraphics[width=\textwidth]{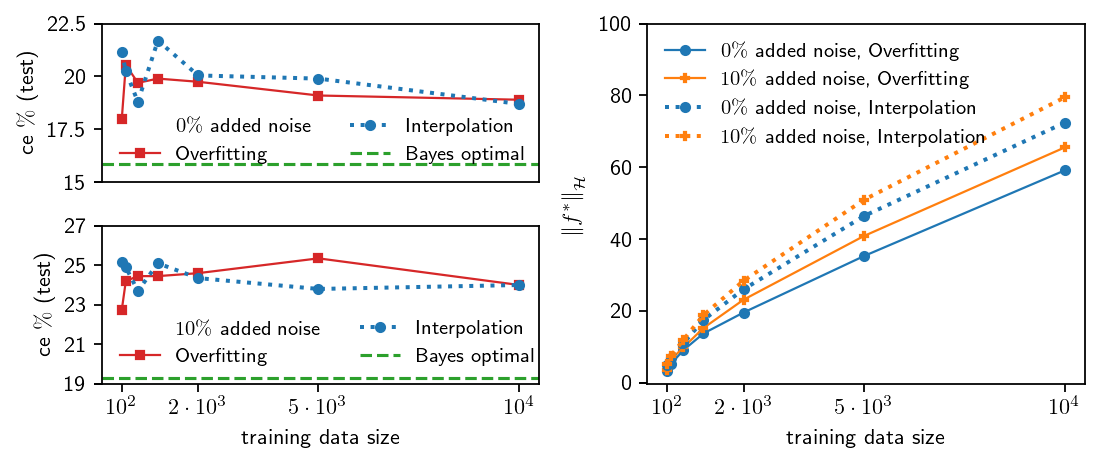}
  \vspace{-8mm}
  \caption{Overfitted and interpolated Gaussian classifiers for non-separable synthetic dataset with added label noise. Left: test error, Right: RKHS norms.}
  \label{fig:synth-0-2}
%   \vspace{-5mm}
  \end{minipage}
%\end{figure}
 \end{wrapfigure}
Since the setting is already noisy, we expect that adding additional label noise should have little effect on the norm. This, indeed, is the case: See Fig~\ref{fig:synth-0-2} (bottom left panel).
We note that the accuracy of the interpolated classifier is  consistently within $5\%$ of the Bayes optimal, even with the added label noise.\newline\newline

\begin{wrapfigure}{r}{0.6\textwidth}
  \centering
  \vspace{-5mm}
  \begin{minipage}[b]{.6\textwidth}
    \includegraphics[width=\textwidth]{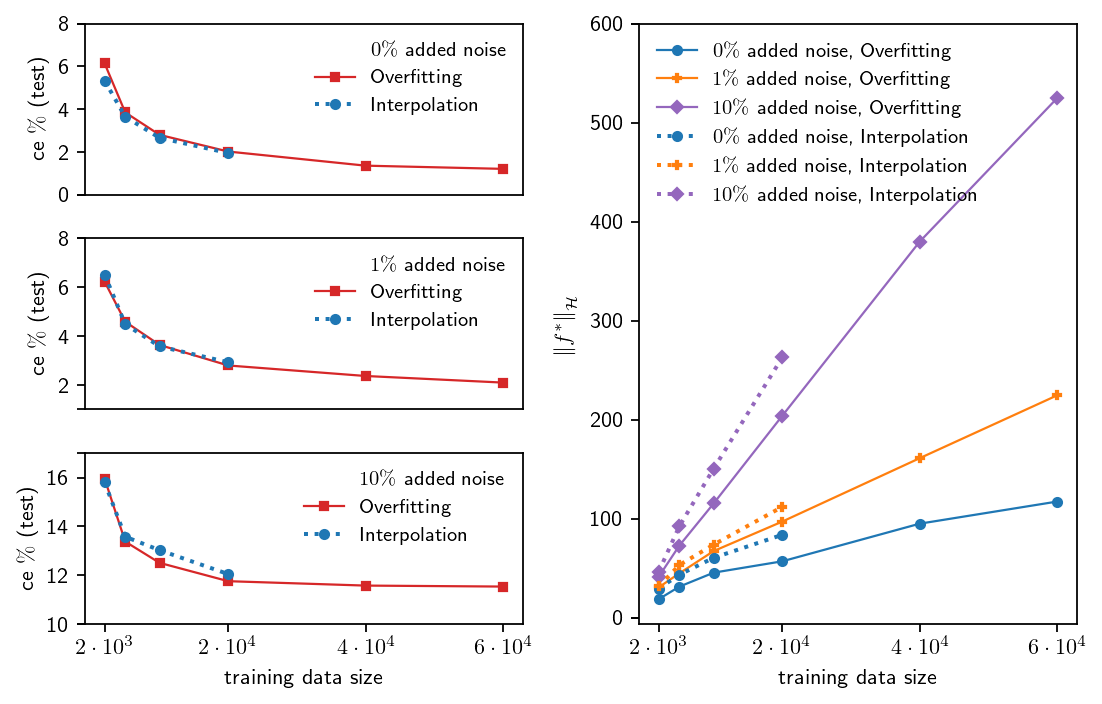}
    \vspace{-8mm}
    \subcaption{MNIST}
    \label{fig:lc-mnist}
  \end{minipage}
  
  \begin{minipage}[b]{.6\textwidth}
    \includegraphics[width=\textwidth]{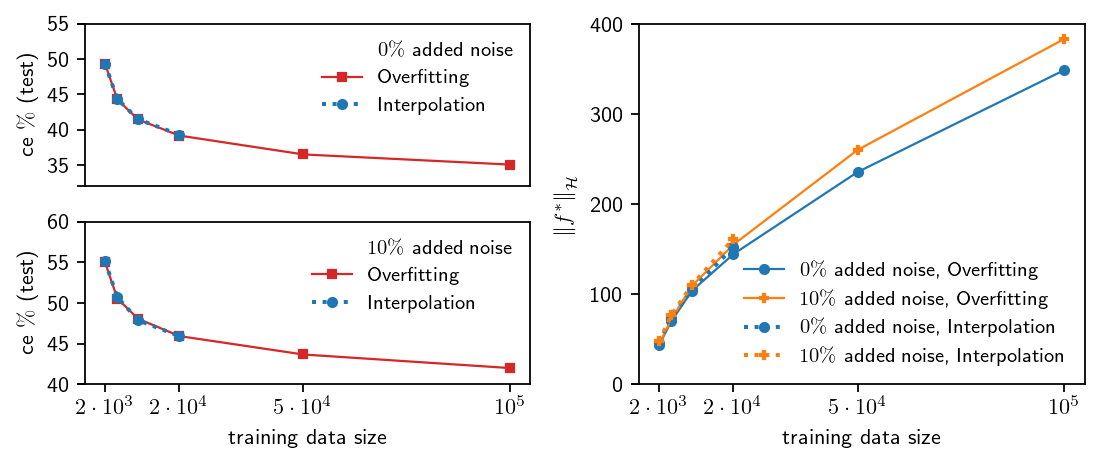}
    \vspace{-8mm}
    \subcaption{TIMIT}
    \label{fig:lc-timit}
  \end{minipage}
  \vspace{-8mm}
  \caption{Overfitted/interpolated Gaussian classifiers (MNIST/TIMIT), added label noise. Left: test error, Right: RKHS norms.}
  \label{fig:lc-real}
  \vspace{-5mm}
\end{wrapfigure}
\noindent {\bf Real data + noise.}
We consider two real-data multiclass datasets (MNIST and TIMIT).  MNIST labels
are arguably close to a deterministic  function of the features, as most (but not all) digit images are easily recognizable. On the other hand, phonetic classification task in TIMIT seem to be significantly more uncertain and inherently noisier. 

This is reflected in the state-of-the-art error rates, less than $0.3\%$ for (10-class) MNIST~\cite{wan2013regularization} and over $30\%$ for (48-class) TIMIT \cite{may2017kernel}.  
While the true Bayes risk for real data cannot be ascertained, we can ensure that it is non-zero by adding label noise.  

Consistently with the expectations,  adding even $1\%$ label noise significantly increases the norm of overfitted/interpolated solutions norm for ``clean''  MNIST, while even additional $10\%$ noise makes only marginal difference for ``noisy'' TIMIT (Fig.~\ref{fig:lc-real}). 
On the other hand, the test performance on either dataset decays gracefully with the amount of noise, as it would for optimal classifiers (according to Eq.~\ref{eq:error_opt}).\newline
%We also note that there is little difference  in classification error between overfitted and interpolated solutions. 

\begin{wrapfigure}{r}{0.61\textwidth}
  \centering
  \vspace{-8mm}

  \begin{minipage}[l]{0.3\textwidth}
    \includegraphics[width=\textwidth]{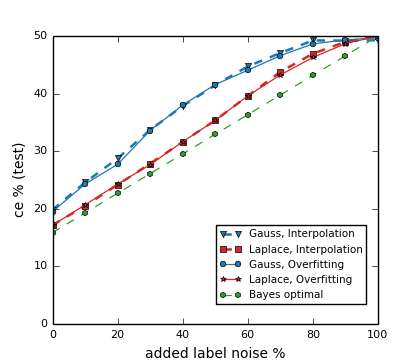}
    \subcaption{Synthetic-2}
  \end{minipage}
  \hfill 
  \begin{minipage}[l]{0.3\textwidth}
    \includegraphics[width=\textwidth]{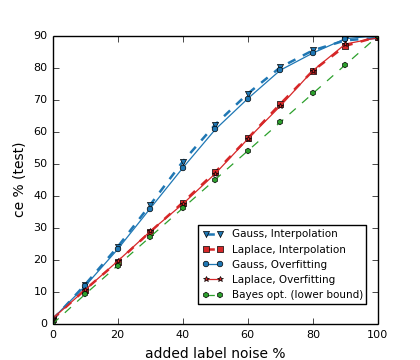}
    \subcaption{MNIST}
    
  \end{minipage}
\vspace{-2mm}
\caption{
%Comparison of classification error (test) for
Overfitted classifiers, interpolated classifiers, and Bayes optimal for datasets with added label noise. y axis: classification error on test.
%at different label noise. %Left: Synthetic-2 dataset, Right: MNIST dataset.
}
  \label{fig:ce_test-lc-o-i-b}
%   \vspace{-5mm}
\end{wrapfigure}
\noindent {\bf High label noise Bayes risk comparison.} In Fig.~\ref{fig:ce_test-lc-o-i-b} we show performance of Gaussian and Laplacian kernels for different levels of added label noise for Synthetic-2 and MNIST datasets. We see that interpolated kernel classifiers perform well and closely track the Bayes risk\footnote{As we do not know the true Bayes risk for MNIST, we use a lower bound by simply assuming it is zero. The ``true'' Bayes risk is likely slightly higher than our curve. } even for 
very high levels of label noise.  There is minimal deterioration as the level of label noise increases. Even at $80\%$ label corruption they perform well above chance. 
Consistently with our observations above, there is very little difference in performance between interpolated and overfitted classifiers. This graph illustrates the difficulty of constructing a non-trivial generalization bound for these noisy regimes, which would have to provide values in the narrow band between the Bayes risk and the risk of a random guess.\\

%\noindent {\bf Summary.} We conclude that the theoretical and experimental results in this section suggest that it would be difficult to reconcile performance of overfitted/interpolated kernel classifiers in the noisy regimes with the usual norm-based bounds.

%We see that the error 

\section{Fitting noise: Laplacian and  Gaussian kernels, connections to ReLU Networks}
\label{sec:fit-noise}

\noindent 
\begin{wraptable}{r}{6.5cm}
\centering
\vspace{-5mm}
\caption{Epochs to overfit (Laplacian)}
\label{tbl:epoch-ep-lap}
\vspace{-2mm}
\resizebox{6.5cm}{!}{
\begin{tabular}{|c||c|c|c|}
\hline
Label & MNIST & SVHN & TIMIT \\ \hline
Original & 4 & 8 & 3 \\ \hline
Random & 7 & 21 & 4 \\ \hline
\end{tabular}
}

\vspace{2mm}
\caption{Epochs to overfit (Gaussian)}
\label{tbl:epoch-ep-gau}
\vspace{-3mm}
\resizebox{6.5cm}{!}{
% \begin{tabular}{|c|c|c|c|c|c|}
% \hline
% Label & Syn. 1 & Syn. 2 & MNIST & SVHN & TIMIT \\ \hline
% Original & 1 & 8 & 20 & 46 & 7 \\ \hline
% Random & 635 & 10 & 873 & 1066 & 22 \\ \hline
% \end{tabular}
\begin{tabular}{|c||c|c|c|}
\hline
Label & MNIST & SVHN & TIMIT \\ \hline
Original & 20 & 46 & 7 \\ \hline
Random & 873 & 1066 & 22 \\ \hline
\end{tabular}
}
% \vspace{-5mm}
\end{wraptable}
{\bf Laplacian kernels and ReLU networks.}
We will now point out some interesting similarities between  Laplacian kernel machines and ReLU networks.
In~\cite{zhang2016understanding} the authors showed that
ReLU neural networks are easily capable of 
fitting labels randomly assigned to the original features, needing only about three times as many iterations of SGD as for the original labels.
In Table~\ref{tbl:epoch-ep-lap} we demonstrate a very similar finding for Laplacian kernels. We see that the number of epochs needed to fit random labels is no more than twice  that for the original labels. Thus,
SGD-type methods with Laplacian kernel have very high computational reach, similar to that of ReLU networks. We note that Laplacian kernels are non-smooth, with a discontinuity of the derivative reminiscent of that for ReLU units. We conjecture that optimization performance is controlled by the type of non-smoothness.

\noindent
{\bf Laplacian and Gaussian kernels.}
On the other hand, training Gaussian kernels to fit noise is far more computationally intensive (see Table.~\ref{tbl:epoch-ep-gau}), as suggested by the bounds on fat shattering dimension for smooth kernels~\cite{belkin2018approximation}. As we see from the table, Gaussian kernels also require  many more epochs to fit the original labels. 
On the other hand, overfitted/interpolated  Gaussian and Laplacian kernels show very similar classification and regression performance on test data (Section~\ref{sec:interpolated}). That similarity persists even with added label noise, see Fig.~\ref{fig:noise_gaus_lap}. 
Hence it appears that
the generalization properties of these classifiers are not related to the specifics of the optimization.
\begin{wrapfigure}{r}{0.6\textwidth}
\vspace{-1mm}
  \centering
  \begin{minipage}[b]{.295\textwidth}
    \includegraphics[width=\textwidth]{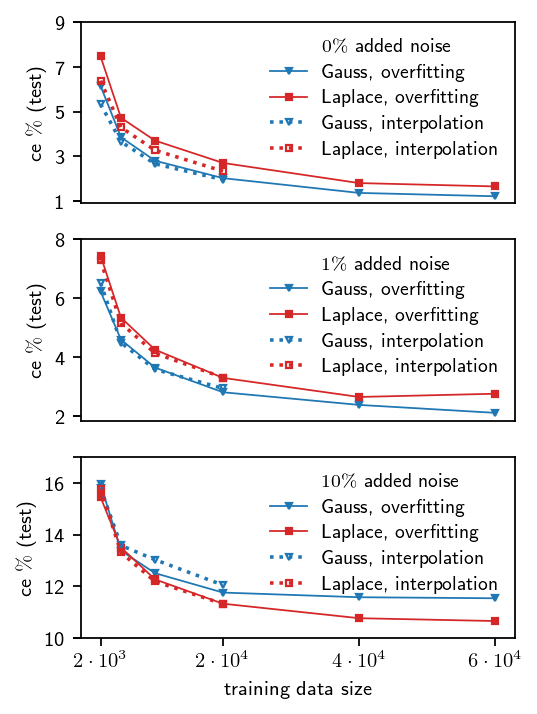}
    \vspace{-5mm}
    \subcaption{MNIST}
  \end{minipage}
  \hfill
  \begin{minipage}[b]{.295\textwidth}
    \includegraphics[width=\textwidth]{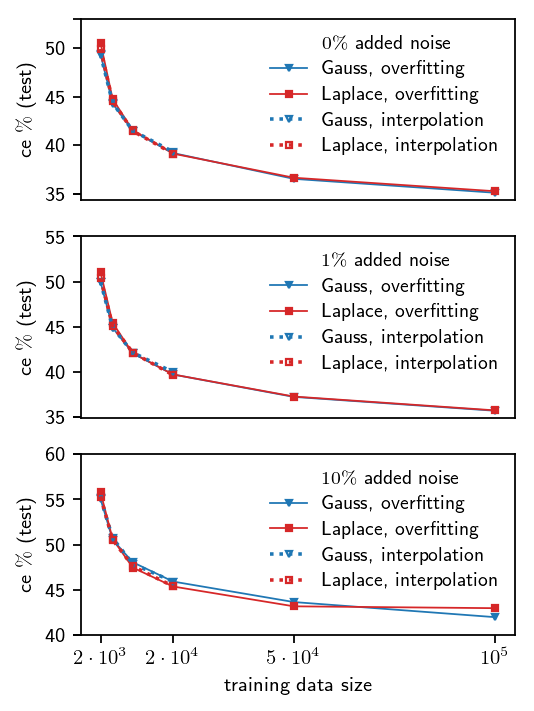}
    \vspace{-5mm}
    \subcaption{TIMIT}
  \end{minipage}
  \vspace{-5mm}
  \caption{Overfitted and interpolated classifiers using Gaussian kernel and Laplace kernel for datasets with added label noise (top: $0\%$, middle: $1\%$, bottom: $10\%$)}
  \label{fig:noise_gaus_lap}
%    \vspace{-6mm}
\end{wrapfigure}
We conjecture that the radial structure of these two kernels plays a key role in ensuring strong classification performance.

\noindent{\bf A note on computational efficiency.}
In our experiments EigenPro traced a very similar optimization path to SGD/Pegasos while providing 10X-40X acceleration in terms of the number of epochs (with about 15\% overhead). 
When combined with Laplacian kernels,  optimal performance is consistently achieved in under $10$ epochs. 
We believe that  methods using  Laplacian kernels hold significant promise for future work on scaling to very large data.

\section*{Acknowledgements}
We  thank Raef Bassily, Daniel Hsu and Partha Mitra for numerous discussions, insightful questions and comments. We thank Like Hui for preprocessing the 20 Newsgroups dataset.
We used a Titan Xp GPU provided by Nvidia. We are grateful to NSF for financial support.

\clearpage
\bibliographystyle{alpha}
\bibliography{ref}

\clearpage
\begin{appendices}
\section{Experimental Setup}
\label{app:expr-setup}
\noindent {\bf Computing Resource.} All experiments were run on a single workstation equipped with 128GB main memory, two Intel Xeon(R) E5-2620 processors, and one Nvidia GTX Titan Xp (Pascal) GPU.

\begin{wraptable}{r}{8.5cm}
\centering
\vspace{-4mm}
\begin{adjustbox}{center}
\resizebox{8.5cm}{!}{
\begin{tabular}{|l||c|c|c|}
\hline
Dataset & $n$ & $d$ & Label \\ \hline % & $\gamma^{\dagger}$ \\ \hline
CIFAR-10
& $5 \times 10^4$ & 1024 & \{0,...,9\} \\ \hline % & 2e-2 \\ \hline
MNIST
& $6 \times 10^4$ & 784 & \{0,...,9\} \\ \hline % & 2e-2 \\ \hline
SVHN
& $7 \times 10^4$ & 1024 & \{1,...,10\} \\ \hline % & 2e-2 \\ \hline
HINT-S
& $2 \times 10^5$ & 425 & $\{0,1\}^{64}$ \\ \hline % & 2e-3 \\ \hline
TIMIT
& $1.1 \times 10^6$ & 440 & \{0,...,143\} \\ \hline % & 3e-3 \\ \hline
20 Newsgroups
& $1.6 \times 10^4$ & 100 & $\{0,1\}^{20}$ \\ \hline % & 2e-2 \\ \hline
\end{tabular}
}
\end{adjustbox}
\vspace{-5mm}
\end{wraptable}
%%%%%%%%%%%%%%%%%%
\noindent {\bf Datasets.}
The table on the right summarizes
the datasets used in experiments.
We map multiclass labels to multiple binary labels (e.g. one label of $c$ classes to one $c$-length binary vector).
For image datasets including MNIST~\cite{lecun1998gradient}, CIFAR-10~\cite{krizhevsky2009learning}, and SVHN~\cite{netzer2011reading}, color images are first transformed to grayscale images. We then rescale the range of each feature to $[0, 1]$.
For HINT-S~\cite{healy2013algorithm} and TIMIT~\cite{garofolo1993darpa}, we normalize each feature by z-score. 
To efficiently fit the
20 Newsgroups~\cite{Lang95} dataset
with kernel regression, we transform its sparse feature vector (bag of words) into dense feature vector by summing up the corresponding embeddings of the words from~\cite{pennington2014glove}.

\begin{wraptable}{r}{5.5cm}
\centering
\vspace{-4mm}
\begin{adjustbox}{center}
\resizebox{5.5cm}{!}{
\begin{tabular}{|l||c|c|}
\hline
\multicolumn{1}{|l||}{Dataset} & Gauss & Laplace\\ \hline
CIFAR-10 & 5 & 10 \\ \hline
MNIST & 5 & 10 \\ \hline
SVHN & 5 & 10 \\ \hline
HINT-S & 11 & 20 \\ \hline
TIMIT & 16 & 20 \\ \hline
20 News & 0.1 & 0.1 \\ \hline
\end{tabular}
}
\end{adjustbox}
\vspace{0mm}
\end{wraptable}
\noindent {\bf Hyperparameters.} For consistent comparison, all iterative methods use mini-batch of size $m = 256$.
The EigenPro preconditioner in~\cite{ma2017diving} is constructed using the top $k = 160$ eigenvectors of a subsampled training set of size $M = 5000$ (or the training set when its size is less than 5000).

\noindent {\bf Kernel Bandwidth Selection.} For each dataset, we select the bandwidth $\sigma$ for Gaussian kernel $k(x, y) = \exp(-\frac{\norm{x - y}^2}{2\sigma^2})$ and Laplace kernel $k(x, y) = \exp(-\frac{\norm{x - y}}{\sigma})$ through cross-validation on a small subsampled dataset. The final bandwidths used for all datasets are listed in the table on the right side.

\section{Detailed experimental results}
\label{app:exper-results} 

\begin{wrapfigure}{r}{0.71\textwidth}
  \centering
  \vspace{-8mm}

  \begin{minipage}[l]{0.35\textwidth}
    \includegraphics[width=\textwidth]{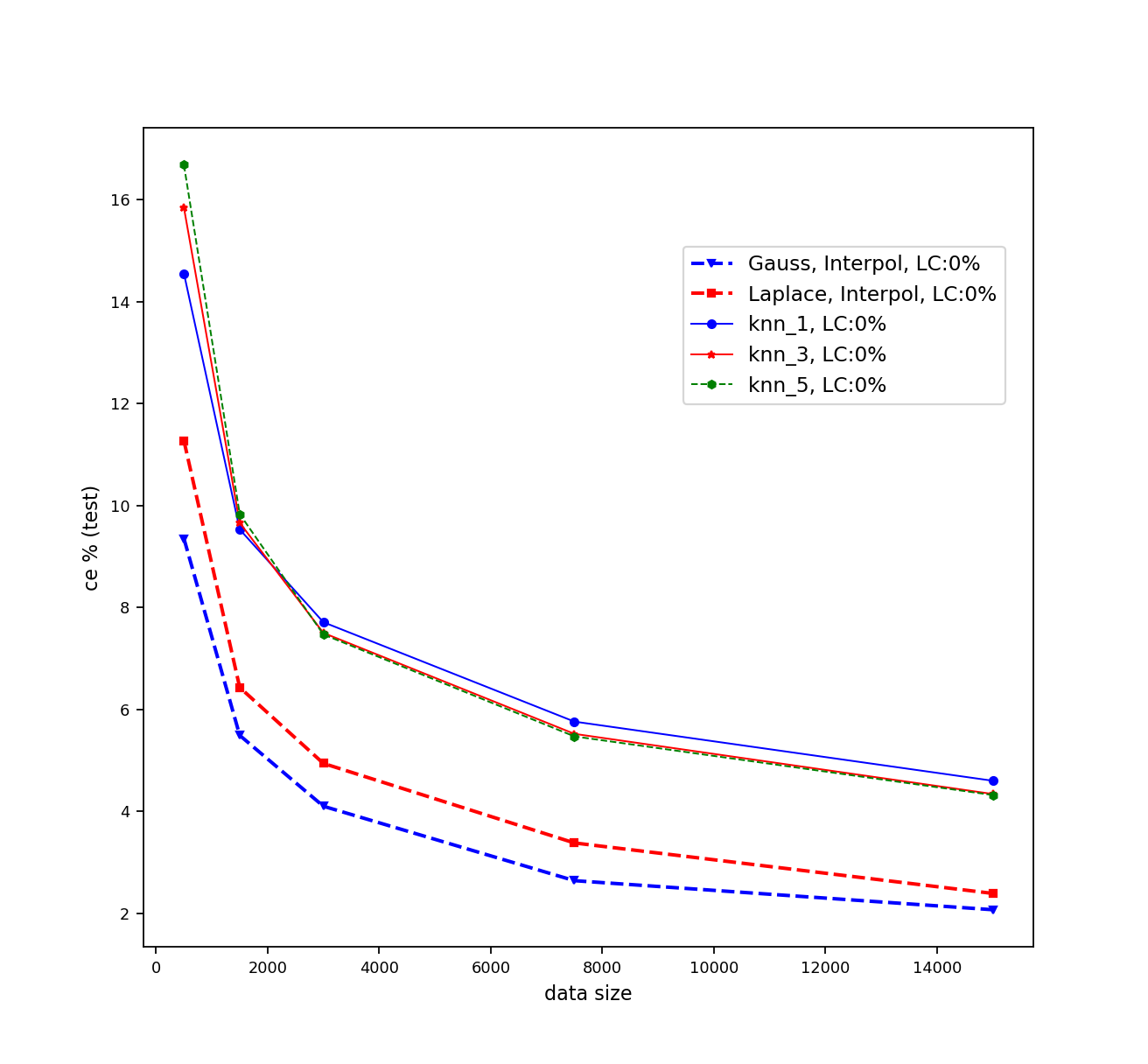}
    \subcaption{Label noise 0\%}
  \end{minipage}
  \hfill 
  \begin{minipage}[l]{0.35\textwidth}
    \includegraphics[width=\textwidth]{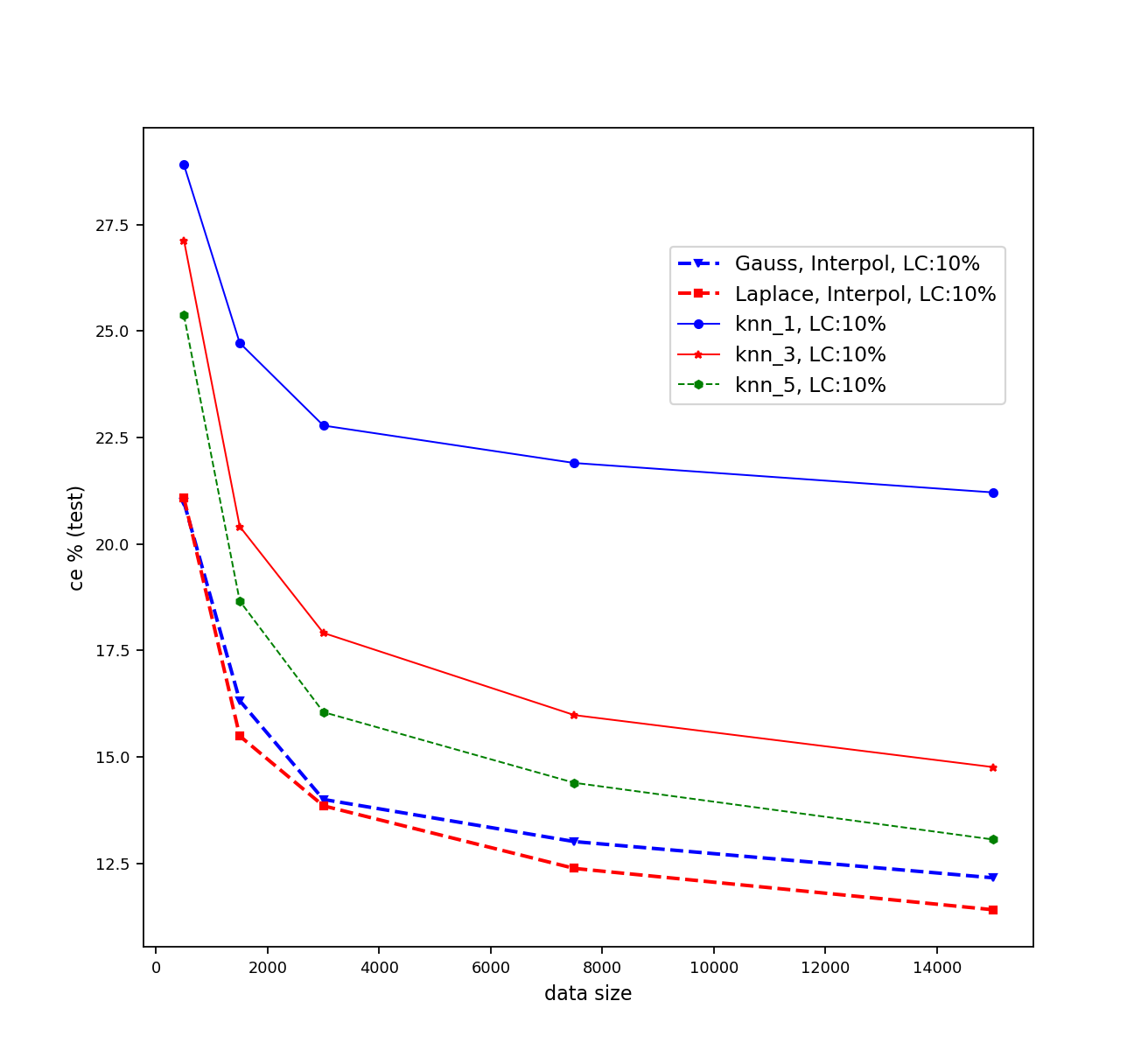}
    \subcaption{Label noise 10\%}
    
  \end{minipage}
\vspace{-2mm}
\caption{
Interpolated classifiers, and k-NN for MNIST dataset. Added label noise 0\% and 10\%. y axis: classification error on test data. x axis: training data size.
}
  \label{fig:ce_test-diffdatasize-i-knn-mnist}
%   \vspace{-5mm}
\end{wrapfigure}
Below in Tables~\ref{mnist-cifar-10-full-table},\ref{svhn-timit-full-table},\ref{hint-s-20newsgroups-full-table} we provide exact detailed numerical  results for the graphs given in Section~\ref{sec:interpolated}. 
In Table~\ref{cetest-datasize-labelnoise0-mnist} and~\ref{cetest-datasize-labelnoise10-mnist} test classification errors have been compared among different training data size and different methods with 0\% and 10\% added label noise respectively. Fig.~\ref{fig:ce_test-diffdatasize-i-knn-mnist} shows this comparison.
Fig.~\ref{fig:ce_test-lc-o-i-b-diffbw} shows results with different bandwidths. Three settings for bandwidth have been considered: 50\%, 100\% and 200\% of the bandwidth selected for optimal performance (different for Gaussian and Laplacian kernels).

\begin{wrapfigure}{r}{0.81\textwidth}
  \centering
  \vspace{-3mm}

  \begin{minipage}[l]{0.4\textwidth}
    \includegraphics[width=\textwidth]{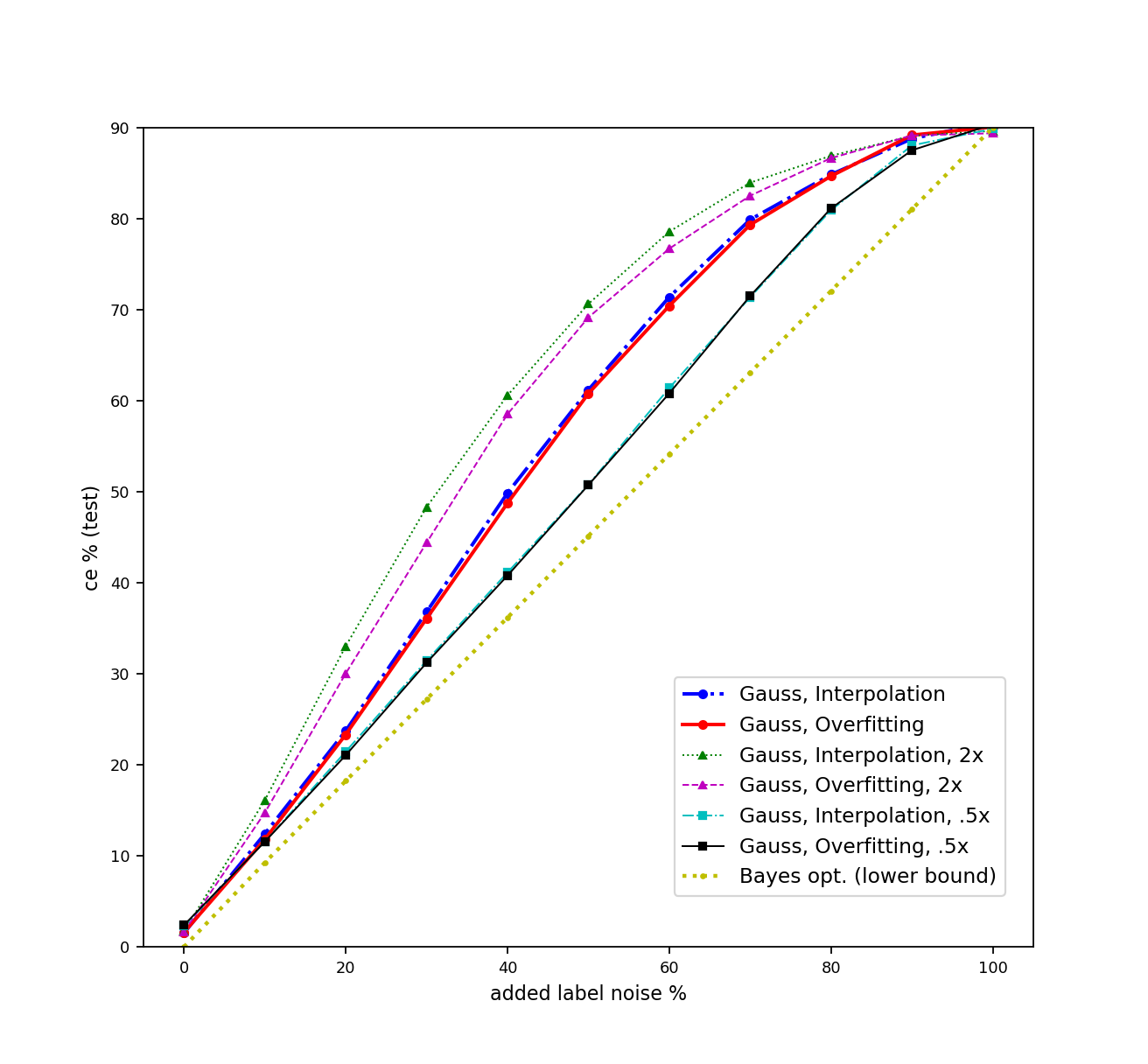}
    \subcaption{Gauss}
  \end{minipage}
  \hfill 
  \begin{minipage}[l]{0.4\textwidth}
    \includegraphics[width=\textwidth]{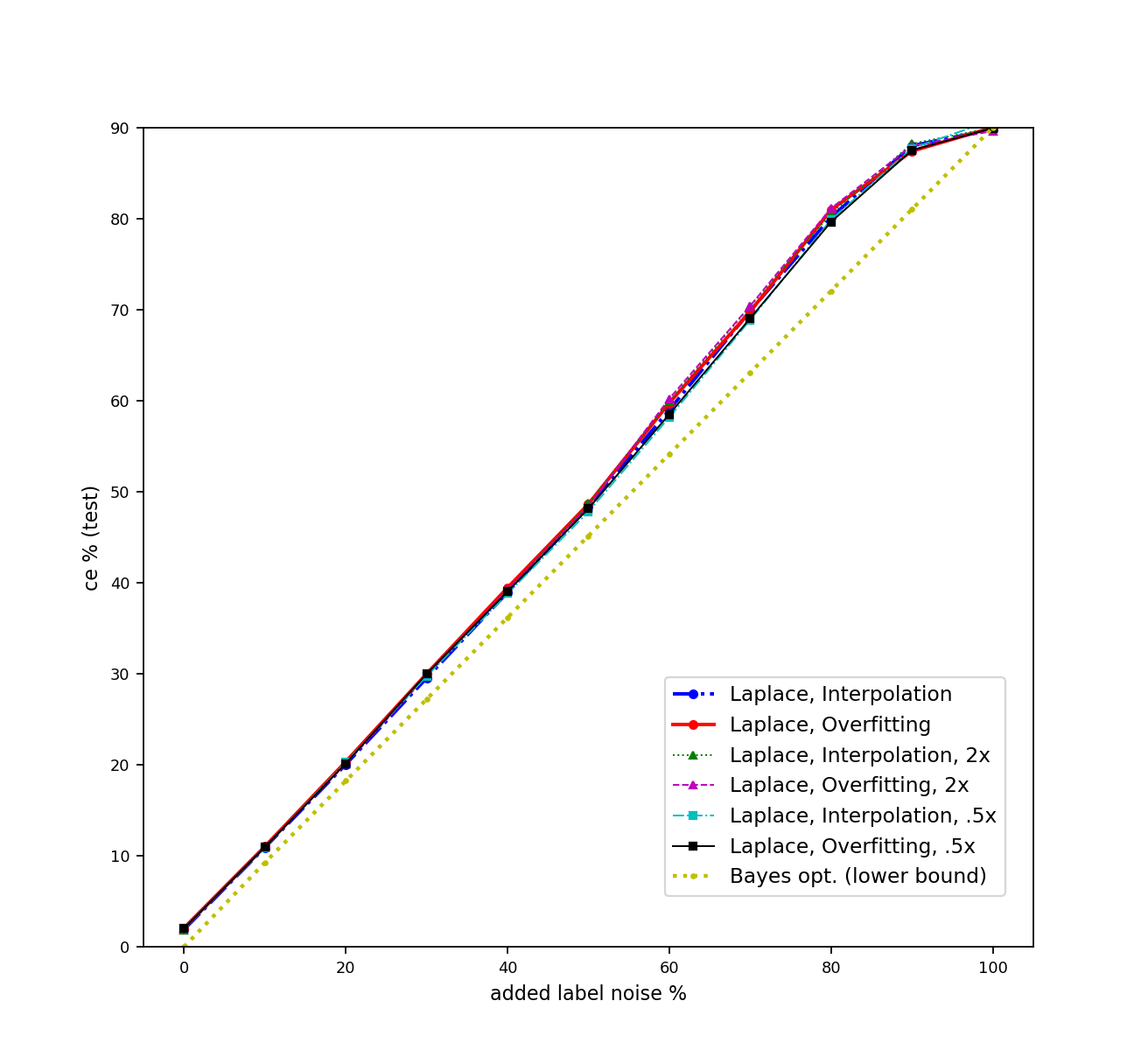}
    \subcaption{Laplace}
    
  \end{minipage}
\vspace{-2mm}
\caption{
Overfitted classifiers, interpolated classifiers, and Bayes optimal for MNIST datasets with added label noise with different kernel bandwidth. y axis: classification error on test data.
}
  \label{fig:ce_test-lc-o-i-b-diffbw}
%   \vspace{-5mm}
\end{wrapfigure}

%\clearpage 

% Please add the following required packages to your document preamble:
% \usepackage{multirow}
\begin{table}[]
\centering
\small
\resizebox{15cm}{!}{%
\begin{tabular}{|c|c||c|c|c|c|c||c|c|c|c|c|}
\hline
\multirow{2}{*}{}                                                           & \multirow{2}{*}{Kernel, Method} & \multicolumn{5}{c||}{Epochs (MNIST)}      & \multicolumn{5}{c|}{Epochs (CIFAR-10)}  \\ \cline{3-12} 
                                                                            &                                 & 1     & 2     & 5      & 10     & 20     & 1     & 2     & 5     & 10     & 20     \\ \hline
\hline 
\multirow{4}{*}{\begin{tabular}[c]{@{}c@{}}ce \\ \%\\ (test)\end{tabular}}  & Gauss, Eipro                    & 1.74  & 1.42  & 1.26   & 1.21   & 1.24   & 57.94 & 54.27 & 50.74 & 50.88  & 51.17  \\ \cline{2-12} 
                                                                            & Laplace, Eipro                  & 2.13  & 1.73  & 1.61   & 1.57   & 1.58   & 55.86 & 51.46 & 48.98 & 48.8   & 48.79  \\ \cline{2-12} 
                                                                            & Gauss, Interp                   & \multicolumn{5}{c||}{1.24}                & \multicolumn{5}{c|}{51.56}              \\ \cline{2-12} 
                                                                            & Laplace, Interp                 & \multicolumn{5}{c||}{1.57}                & \multicolumn{5}{c|}{48.75}              \\ \hline
\hline 
\multirow{4}{*}{\begin{tabular}[c]{@{}c@{}}ce \\ \%\\ (train)\end{tabular}} & Gauss, Eipro                    & 0.44  & 0.16  & 0.018  & 0.003  & 0.0    & 18.73 & 5.28  & 0.23  & 0.03   & 0.0    \\ \cline{2-12} 
                                                                            & Laplace, Eipro                  & 0.32  & 0.03  & 0.0    & 0.0    & 0.0    & 6.95  & 0.23  & 0.0   & 0.0    & 0.0    \\ \cline{2-12} 
                                                                            & Gauss, Interp                   & \multicolumn{5}{c||}{0}                   & \multicolumn{5}{c|}{0}                  \\ \cline{2-12} 
                                                                            & Laplace, Interp                 & \multicolumn{5}{c||}{0}                   & \multicolumn{5}{c|}{0}                  \\ \hline
\hline 
\hline 
\multirow{4}{*}{\begin{tabular}[c]{@{}c@{}}mse \\ (test)\end{tabular}}      & Gauss, Eipro                    & 0.077 & 0.066 & 0.06   & 0.05   & 0.05   & 3.42  & 3.14  & 2.92  & 2.91   & 2.95   \\ \cline{2-12} 
                                                                            & Laplace, Eipro                  & 0.083 & 0.07  & 0.062  & 0.06   & 0.06   & 2.94  & 2.77  & 2.67  & 2.65   & 2.65   \\ \cline{2-12} 
                                                                            & Gauss, Interp                   & \multicolumn{5}{c||}{0.05}                & \multicolumn{5}{c|}{3.00}               \\ \cline{2-12} 
                                                                            & Laplace, Interp                 & \multicolumn{5}{c||}{0.06}                & \multicolumn{5}{c|}{2.65}               \\ \hline
\hline 
\multirow{4}{*}{\begin{tabular}[c]{@{}c@{}}mse \\ (train)\end{tabular}}     & Gauss, Eipro                    & 0.049 & 0.031 & 0.012  & 0.005  & 0.002  & 1.88  & 1.07  & 0.32  & 0.09   & 0.02   \\ \cline{2-12} 
                                                                            & Laplace, Eipro                  & 0.046 & 0.022 & 3.9e-3 & 3.7e-4 & 8.2e-6 & 1.44  & 0.69  & 0.09  & 5.4e-3 & 4.4e-5 \\ \cline{2-12} 
                                                                            & Gauss, Interp                   & \multicolumn{5}{c||}{3.2e-27}             & \multicolumn{5}{c|}{1.5e-8}             \\ \cline{2-12} 
                                                                            & Laplace, Interp                 & \multicolumn{5}{c||}{4.6e-28}             & \multicolumn{5}{c|}{1.6e-8}             \\ \hline
\end{tabular}
}
\caption{MNIST and CIFAR-10 summary table.}
\label{mnist-cifar-10-full-table}
\end{table}

% Please add the following required packages to your document preamble:
% \usepackage{multirow}
\begin{table}[]
\label{table:summary1}
\centering 
\small 
\resizebox{15cm}{!}{%
\begin{tabular}{|c|c||c|c|c|c|c||c|c|c|c|c|}
\hline
\multirow{2}{*}{}                                                           & \multirow{2}{*}{Kernel, Method} & \multicolumn{5}{c||}{Epochs (SVHN 20k)}  & \multicolumn{5}{c|}{Epochs (TIMIT 50k)}  \\ \cline{3-12} 
                                                                            &                                 & 1     & 2     & 5     & 10     & 20     & 1     & 2     & 5      & 10     & 20     \\ \hline
                                                                            \hline 
\multirow{4}{*}{\begin{tabular}[c]{@{}c@{}}ce \\ \%\\ (test)\end{tabular}}  & Gauss, Eipro                    & 33.57 & 29.32 & 25.37 & 23.82  & 23.96  & 40.04 & 38.16 & 36.63  & 36.56  & 36.57  \\ \cline{2-12} 
                                                                            & Laplace, Eipro                  & 30.47 & 27.48 & 24.27 & 23.47  & 23.51  & 38.36 & 36.95 & 36.5   & 36.5   & 36.51  \\ \cline{2-12} 
                                                                            & Gauss, Interp                   & \multicolumn{5}{c||}{24.30}              & \multicolumn{5}{c|}{36.61}               \\ \cline{2-12} 
                                                                            & Laplace, Interp                 & \multicolumn{5}{c||}{23.54}              & \multicolumn{5}{c|}{36.51}               \\ \hline
\hline                                                                             
\multirow{4}{*}{\begin{tabular}[c]{@{}c@{}}ce \\ \%\\ (train)\end{tabular}} & Gauss, Eipro                    & 10.81 & 4.19  & 0.71  & 0.07   & 0.0    & 2.13  & 0.28  & 0.0    & 0.002  & 0.0    \\ \cline{2-12} 
                                                                            & Laplace, Eipro                  & 3.79  & 0.34  & 0.0   & 0.0    & 0.0    & 0.39  & 0.0   & 0.0    & 0.0    & 0.0    \\ \cline{2-12} 
                                                                            & Gauss, Interp                   & \multicolumn{5}{c||}{0}                  & \multicolumn{5}{c|}{0}                   \\ \cline{2-12} 
                                                                            & Laplace, Interp                 & \multicolumn{5}{c||}{0}                  & \multicolumn{5}{c|}{0}                   \\ \hline
\hline                                                                             
\hline 
\multirow{4}{*}{\begin{tabular}[c]{@{}c@{}}mse \\ (test)\end{tabular}}      & Gauss, Eipro                    & 2.69  & 2.30  & 1.96  & 1.87   & 1.88   & 0.75 & 0.75  & 0.74   & 0.74   & 0.74   \\ \cline{2-12} 
                                                                            & Laplace, Eipro                  & 2.09  & 1.89  & 1.76  & 1.73   & 1.73   & 0.75 & 0.75 & 0.75  & 0.75  & 0.75  \\ \cline{2-12} 
                                                                            & Gauss, Interp                   & \multicolumn{5}{c||}{1.89}               & \multicolumn{5}{c|}{0.73}                \\ \cline{2-12} 
                                                                            & Laplace, Interp                 & \multicolumn{5}{c||}{1.73}               & \multicolumn{5}{c|}{0.73}                \\ \hline
\hline                                                                             
\multirow{4}{*}{\begin{tabular}[c]{@{}c@{}}mse \\ (train)\end{tabular}}     & Gauss, Eipro                    & 1.71  & 0.95  & 0.34  & 0.08   & 0.01   & 0.163 & 0.065 & 0.006  & 5.8e-4 & 3.1e-5 \\ \cline{2-12} 
                                                                            & Laplace, Eipro                  & 1.06  & 0.52  & 0.08  & 4.5e-3 & 4.2e-5 & 0.059 & 0.015 & 4.5e-4 & 4.8e-6 & 2.1e-7 \\ \cline{2-12} 
                                                                            & Gauss, Interp                   & \multicolumn{5}{c||}{8.9e-27}            & \multicolumn{5}{c|}{2.5e-10}             \\ \cline{2-12} 
                                                                            & Laplace, Interp                 & \multicolumn{5}{c||}{2.0e-26}            & \multicolumn{5}{c|}{1.1e-9}              \\ \hline
\end{tabular}
}
\caption{ SVHN and TIMIT summary table.}
\label{svhn-timit-full-table}
\end{table}

% Please add the following required packages to your document preamble:
% \usepackage{multirow}
\begin{table}[]
\label{table:summary1}
\centering
\small 
\resizebox{15cm}{!}{%
\begin{tabular}{|c|c||c|c|c|c|c||c|c|c|c|c|}
\hline
\multirow{2}{*}{}                                                           & \multirow{2}{*}{Kernel, Method} & \multicolumn{5}{c||}{Epochs (HINT-S-20k)} & \multicolumn{5}{c|}{Epochs (20 Newsgroups)} \\ \cline{3-12} 
                                                                            &                                 & 1      & 2      & 5     & 10    & 20     & 1       & 2      & 5      & 10     & 20     \\ \hline
\hline 
\multirow{4}{*}{\begin{tabular}[c]{@{}c@{}}ce \\ \%\\ (test)\end{tabular}}  & Gauss, Eipro                    & 15.71  & 14.55  & 13.67 & 13.28 & 13.10  & 52.95   & 49.55  & 41.75  & 40.20  & 36.15  \\ \cline{2-12} 
                                                                            & Laplace, Eipro                  & 15.09  & 13.67  & 12.76 & 12.68 & 12.67  & 49.95   & 44.65  & 35.45  & 34.59  & 34.20  \\ \cline{2-12} 
                                                                            & Gauss, Interp                   & \multicolumn{5}{c||}{13.67}               & \multicolumn{5}{c|}{38.75}                  \\ \cline{2-12} 
                                                                            & Laplace, Interp                 & \multicolumn{5}{c||}{12.65}               & \multicolumn{5}{c|}{33.95}                  \\ \hline
\hline 
\multirow{4}{*}{\begin{tabular}[c]{@{}c@{}}ce \\ \%\\ (train)\end{tabular}} & Gauss, Eipro                    & 10.99  & 7.60   & 3.69  & 1.56  & 0.37   & 30.82   & 24.52  & 11.71  & 10.08  & 2.34   \\ \cline{2-12} 
                                                                            & Laplace, Eipro                  & 5.94   & 1.13   & 0.02  & 0.0   & 0.0    & 10.35  & 2.29  & 0.11  & 0.05  & 0.01  \\ \cline{2-12} 
                                                                            & Gauss, Interp                   & \multicolumn{5}{c||}{0}                   & \multicolumn{5}{c|}{0}                      \\ \cline{2-12} 
                                                                            & Laplace, Interp                 & \multicolumn{5}{c||}{0}                   & \multicolumn{5}{c|}{0}                      \\ \hline
\hline 
\hline 
\multirow{4}{*}{\begin{tabular}[c]{@{}c@{}}mse \\ (test)\end{tabular}}      & Gauss, Eipro                    & 7.91   & 7.35   & 6.90  & 6.72  & 6.61   & 1.30    & 1.23   & 0.84   & 0.79   & 0.69   \\ \cline{2-12} 
                                                                            & Laplace, Eipro                  & 7.56   & 6.76   & 6.22  & 6.11  & 6.09   & 1.15   & 0.87  & 0.61  & 0.59  & 0.59  \\ \cline{2-12} 
                                                                            & Gauss, Interp                   & \multicolumn{5}{c||}{7.26}                & \multicolumn{5}{c|}{0.96}                   \\ \cline{2-12} 
                                                                            & Laplace, Interp                 & \multicolumn{5}{c||}{6.09}                & \multicolumn{5}{c|}{0.59}                   \\ \hline
\hline 
\multirow{4}{*}{\begin{tabular}[c]{@{}c@{}}mse \\ (train)\end{tabular}}     & Gauss, Eipro                    & 5.87   & 4.49   & 2.82  & 1.67  & 0.78   & 0.99   & 0.79  & 0.32  & 0.22  & 0.09  \\ \cline{2-12} 
                                                                            & Laplace, Eipro                  & 3.87  & 1.82  & 0.31 & 0.027 & 7.7e-4 & 1.06   & 0.52  & 0.04  & 0.007  & 0.002  \\ \cline{2-12} 
                                                                            & Gauss, Interp                   & \multicolumn{5}{c||}{5.5e-7}              & \multicolumn{5}{c|}{2.3e-22}                \\ \cline{2-12} 
                                                                            & Laplace, Interp                 & \multicolumn{5}{c||}{5.7e-9}              & \multicolumn{5}{c|}{7.5e-28}                \\ \hline
\end{tabular}
}
\caption{ HINT-S-20k and 20 Newsgroups summary table.}
\label{hint-s-20newsgroups-full-table}
\end{table}

%%%% latest results

\begin{table}[]
\centering 
\begin{tabular}{|c|c|c|c|c|c|c|}
\hline
\multirow{2}{*}{added label noise = 0\%} & \multirow{2}{*}{Methods} & \multicolumn{5}{c|}{Data Size (MNIST)} \\ \cline{3-7} 
                                         &                          & 500    & 1500  & 3000  & 7500  & 15000 \\ \hline
\multirow{5}{*}{ce \% (test)}            & Gauss, Interpolation     & 9.35   & 5.49  & 4.10  & 2.64  & 2.07  \\ \cline{2-7} 
                                         & Laplace, Interploation   & 11.27  & 6.43  & 4.94  & 3.38  & 2.39  \\ \cline{2-7} 
                                         & KNN-1                    & 14.55  & 9.53  & 7.71  & 5.76  & 4.60  \\ \cline{2-7} 
                                         & KNN-3                    & 15.84  & 9.66  & 7.50  & 5.52  & 4.34  \\ \cline{2-7} 
                                         & KNN-5                    & 16.68  & 9.83  & 7.47  & 5.47  & 4.32  \\ \hline
\end{tabular}
\caption{Classification error (\%) for test data vs training data size  with 0\% added label noise for MNIST dataset.}
\label{cetest-datasize-labelnoise0-mnist}
\end{table}

% Please add the following required packages to your document preamble:
% \usepackage{multirow}
\begin{table}[]
\centering
\begin{tabular}{|c|c|c|c|c|c|c|}
\hline
\multirow{2}{*}{added label noise = 10\%} & \multirow{2}{*}{Methods} & \multicolumn{5}{c|}{Data Size (MNIST)} \\ \cline{3-7} 
                                          &                          & 500    & 1500  & 3000  & 7500  & 15000 \\ \hline
\multirow{5}{*}{ce \% (test)}             & Gauss, Interpolation     & 20.98  & 16.32 & 14.00 & 13.01 & 12.16 \\ \cline{2-7} 
                                          & Laplace, Interploation   & 21.08  & 15.49 & 13.85 & 12.38 & 11.41 \\ \cline{2-7} 
                                          & KNN-1                    & 28.90  & 24.72 & 22.78 & 21.90 & 21.21 \\ \cline{2-7} 
                                          & KNN-3                    & 27.13  & 20.41 & 17.91 & 15.98 & 14.76 \\ \cline{2-7} 
                                          & KNN-5                    & 25.37  & 18.66 & 16.05 & 14.39 & 13.06 \\ \hline
\end{tabular}
\caption{Classification error (\%) for test data vs training data size  with 10\% added label noise for MNIST dataset.}
\label{cetest-datasize-labelnoise10-mnist}
\end{table}

% \section{Supplementary Experimental Results}

% \begin{table}[!ht]
% \centering
% \begin{tabular}{|c||c|c||c|c||c|c||c|c||}
% \hline
% \multirow{3}{*}{Label} & \multicolumn{8}{c|}{Size of subsample training data} \\ \cline{2-9} 
%  & \multicolumn{2}{c||}{$5 \cdot 10^3$} & \multicolumn{2}{c||}{$1.5 \cdot 10^4$} & \multicolumn{2}{c||}{$3 \cdot 10^4$} & \multicolumn{2}{c||}{$6 \cdot 10^4$} \\ \cline{2-9} 
%  & EiPro & SGD & EiPro & SGD & EiPro & SGD & EiPro & SGD \\ \hline

% Original & 2 & 75 & 3 & 129 & 4 & 141 & 4 & 151 \\ \hline

% Random & 4 & 112 & 4 & 131 & 7 & 181 & 7 & 184 \\ \hline

% \end{tabular}
% \caption{Epochs to overfit MNIST with original label and random label using EigenPro (EiPro) and SGD}
% \end{table}

% \input{figs/lc_synth_real.tex}

\end{appendices}

\end{document}